\def\figref#1{figure~\ref{#1}}
\def\eqref#1{equation~\ref{#1}}
\def\1{\bm{1}}
\DeclareMathAlphabet{\mathsfit}{\encodingdefault}{\sfdefault}{m}{sl}
\SetMathAlphabet{\mathsfit}{bold}{\encodingdefault}{\sfdefault}{bx}{n}
\newcommand{\ex}{\mathbb {E}}
\newcommand{\Hquad}{\hspace{0.5em}} 
\renewcommand{\eqref}[1]{Eq. \ref{#1}}
\def\independenT#1#2{\mathrel{\rlap{$#1#2$}\mkern2mu{#1#2}}}
\newcommand{\Ybase}{Y_{\emptyset}}
\newcommand{\EqualBenefit}{\texttt{EqB}}
\newcommand{\ModBrk}{\texttt{ModBrk}}
\newenvironment{customthm}[1]
  {\innercustomthm}
  {\endinnercustomthm}
\newcommand{\T}{\mathsf{T}}
\newtheorem{prop}{Proposition}
\newcommand{\sbase}{\emptyset}
\newtheorem{lemma}{Lemma}
\newtheorem{proof}{Proof}
\definecolor{lightseagreen}{rgb}{0.13, 0.7, 0.67}
\definecolor{lilac}{rgb}{0.78, 0.64, 0.78}
\definecolor{etonblue}{rgb}{0.59, 0.78, 0.64}
\tikzstyle{ocont}=[ellipse,draw=black!100,thick,minimum size=6mm,>=stealth]  
\tikzstyle{uc}=[ellipse,dashed,draw=black!100,thick,minimum size=6mm,>=stealth]  
\tikzstyle{decision}=[rectangle,draw=black!100,thick,minimum size=6mm,>=stealth]  
\tikzstyle{response}=[diamond,draw=black!100,thick,minimum size=6mm,>=stealth]  
\tikzstyle{dot}=[circle,fill,inner sep=2.5pt]  
\tikzstyle{square}=[rectangle,fill,inner sep=2.5pt]  
\tikzstyle{dgraph}=[->, line width=1.5pt]
\tikzstyle{plate} = [draw, rectangle, rounded corners, fit=#1]
\tikzstyle{det} = [diamond]
\tikzstyle{plate caption} = [caption, node distance=0, inner sep=0pt,
\tikzstyle{obs} = [circle,fill=white,draw=black,inner sep=1pt,minimum size=20pt,font=\fontsize{10}{10}\selectfont,node distance=1,thick]
\tikzstyle{obslight} = [circle,fill=white,inner sep=1pt,minimum size=20pt,font=\fontsize{10}{10}\selectfont,node distance=1,thick]
\tikzstyle{latent} = [obs,dotted]
\tikzstyle{plate} = [draw, rectangle, minimum size=10pt]
\tikzstyle{platelight} = [rectangle, minimum size=20pt]
\tikzstyle{bayesplate} = [draw, rectangle, rounded corners, fit=#1]
\tikzstyle{wrap} = [inner sep=0pt, fit=#1]
\tikzstyle{caption} = [font=\footnotesize, node distance=0] %
\tikzstyle{plate caption} = [caption, node distance=0, inner sep=0pt,
\tikzstyle{bayesplate caption} = [caption, node distance=0, inner sep=0pt,
\tikzstyle{factor caption} = [caption] %
\tikzstyle{every label} += [caption] %
\newcommand{\edgebent}[3][]{ %
  \foreach \x in {#2} { %
    \foreach \y in {#3} { %
      \path (\x) edge [->, >={triangle 45}, #1] (\y) ;%
    } ;
  } ;
}
\newcommand{\bayesplate}[4][]{ %
  \node[wrap=#3] (#2-wrap) {}; %
  \node[bayesplate caption=#2-wrap] (#2-caption) {#4}; %
  \node[bayesplate=(#2-wrap)(#2-caption), #1] (#2) {}; %
}
    \def\tikz@path@do@at@end{\endpgfonlayer\endgroup\tikz@path@do@at@end}%
\newcommand\independent{\protect\mathpalette{\protect\independenT}{\perp}}
\def\independenT#1#2{\mathrel{\rlap{$#1#2$}\mkern2mu{#1#2}}}
\title{Pragmatic Fairness: Developing Policies with Outcome Disparity Control}
\author{
    \name Limor Gultchin \\
    \addr University of Oxford, The Alan Turing Institute, DeepMind
    \AND 
    \name Siyuan Guo \\
    \addr University of Cambridge, Max Planck Institute for Intelligent Systems
    \AND
    \name Alan Malek \\
    \addr DeepMind
    \AND 
    \name Silvia Chiappa \\
    \addr DeepMind
    \AND \name Ricardo Silva \\
    \addr University College London}
\begin{document}

\maketitle

\begin{abstract}
We introduce a causal framework for designing optimal policies that satisfy fairness constraints. We take a pragmatic approach asking what we can do with an action space available to us and only with access to historical data. We propose two different fairness constraints: a moderation breaking constraint which aims at blocking moderation paths from the action and sensitive attribute to the outcome, and by that at reducing disparity in outcome levels as much as the provided action space permits; and an equal benefit constraint which aims at distributing gain from the new and maximized policy equally across sensitive attribute levels, and thus at keeping pre-existing preferential treatment in place or avoiding the introduction of new disparity.
We introduce practical methods for implementing the constraints and illustrate their uses on experiments with semi-synthetic models.
\end{abstract}

\section{Introduction} \label{sec:intro}

The fairness of decisions made by machine learning models involving underprivileged groups has seen increasing attention and scrutiny by the academic community and beyond. A growing body of literature has been looking at the unfavourable treatment that might arise from historical biases present in the data, data collection practices, or the limits of modelling choices and techniques. Within this field of study, the vast majority of works has considered the problem of designing \emph{fair prediction systems}, i.e. systems whose outcomes satisfy certain properties with respect to membership in a sensitive group \citep{verma2018fairness,barocas2019fairness, mehrabi2019asurvey, wachter2020bias, mitchell2021algorithmic, pesach2022areview}.  
In contrast, relatively little attention has been given to the problem of designing \textit{fair optimal policies}  \citep{joseph2016fairness,joseph18meritocratic,gillen2019online,kusner2019making,nabi2019learning,chohlaswood2022learning}. 
In this case, the goal is to design a decision making system that specifies how to select \textit{actions} that maximize a \textit{downstream outcome} of interest subject to fairness constraints. 

The consideration of an outcome downstream of the action allows a more flexible and powerful approach to fair decision making, as it enables to optimize future outcomes rather than matching historical ones, and also to enforce fairness constraints on the effects of the actions rather than on the actions themselves. 
For example, when deciding on offering college admissions, a fair prediction system would output admissions that match historical ones and satisfy certain properties with respect to
membership in a sensitive group. In contrast, a fair optimal policy system would prescribe admissions such that downstream outcomes, e.g. academic or economic successes, are maximised and satisfy certain properties with respect to membership in a sensitive group. Enforcing fairness constraints on the effects of the actions in decision problems that correspond to allocations of resources or goods may reduce the risk of unfair delayed impact \citep{dwork2011fairness, liu2018delayed, damour2020fairness}.

In this work, we propose a causal framework for designing fair optimal policies that is inspired by the public health literature \citep{jackson2018decomposition,jackson2018interpretation,jackson2020meaningful}. We assume the causal graph in \figref{fig:setup}a, where action $A$ depends on sensitive attribute $S$ and covariates $X$; and where $S$ and $X$, which can be associated, potentially directly influence the outcome $Y$. We define the policy with the conditional distribution $p(A\,|\,S,X;\sigma_A)$ parameterized by $\sigma_{A}$--represented in the graph as a \textit{regime indicator} \citep{dawid07fundamentals,correa2020calculus}. The aim is to learn a parametrization that maximizes $Y$ in expectation while controlling its dependence on $S$. This graph describes common real-world settings in which the action can only indirectly control for the dependence of $Y$ on $S$, and therefore to a level that depends on the available action space. In addition, it enables to learn the optimal policy from available historical data collected using a baseline policy $p(A\,|\,S, X ; \sigma_A=\emptyset)$, overcoming issues such as, e.g., cost or ethical constraints in taking actions, which are common in real-world applications where fairness is of relevance. To gain more insights into the difference between our fair optimal policy framework and the fair prediction system framework, we offer a schema of the fair prediction setting with the casual graph in \figref{fig:setup}b, in which the action coincides with the outcome; and describe the goal as learning a parametrization such that $p(Y\,|\,S, X ; \sigma_Y)$ matches the distribution of past outcomes $p(Y\,|\,S, X ; \sigma_Y=\emptyset)$ while controlling for dependence of the outcome distribution on $S$.

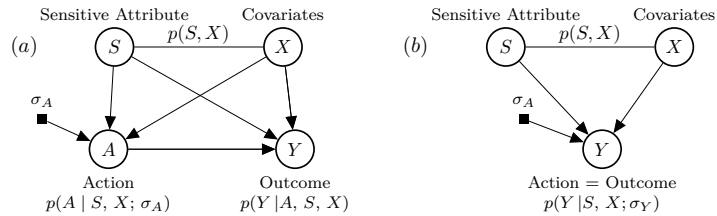
\begin{figure}[!t]
\centering
\hspace{-0.2cm}
\scalebox{0.8}{
\begin{tikzpicture}
\node[] (a) [] at (-5.8, 1.7) {$(a)$};
\node[scale=0.9]  at (-2.9,1.92) {$p(S,X)$}; 
\node[obs,scale=0.9,label={[align=center] north: Sensitive  Attribute}] (S) at (-4.3, 1.7) {$S$};
\node[obs,scale=0.9,label={north:Covariates}] (X) at (-1.5, 1.7) {$X$};
\node[obs,scale=0.9,label={[align=center] south: Action \\ $p(A\;|\;S\text{, } X\text{; } \sigma_A)$}]  (A)at (-4.4,0) {$A$};
\node[square,scale=0.9] (P)[label= north:$\sigma_A$] at (-5.5, 0.5) {};
\node[obs,scale=0.9, label={[align=center] south: Outcome \\ $p(Y\,|A\text{, } S\text{, } X)$}]  (Y) at (-1.3, 0) {$Y$};
\edge{S}{A};
\edge{P}{A};
\edge{A}{Y};
\edge{X}{Y};
\edge{S}{Y};
\edge{X}{Y};
\edge{A}{Y};
\edge{X}{A};
\edge[-]{S}{X}
\end{tikzpicture}
}
\hspace{0.1cm}
\scalebox{0.8}{
\begin{tikzpicture}
\node[] (b) [] at (-5.8, 1.7) {$(b)$};
\node[scale=0.9]  at (-2.9,1.92) {$p(S,X)$}; 
\node[obs,scale=0.9,label={[align=center] north: Sensitive  Attribute}] (S) at (-4.3, 1.7) {$S$};
\node[obs,scale=0.9,label={north:Covariates}] (X) at (-1.5, 1.7) {$X$};
\node[square,scale=0.9] (P)[label= north:$\sigma_A$] at (-4, 0.5) {};
\node[obs,scale=0.9, label={[align=center]south: Action = Outcome\\$p(Y\,|S\text{, } X; \sigma_Y)$}]  (Y) at (-2.7, 0) {$Y$};
\edge{P}{Y};
\edge{X}{Y};
\edge{S}{Y};
\edge[-]{S}{X}
\end{tikzpicture}
}
\caption{(a): Causal graph with associated distribution $p(A,S,X,Y;\sigma_A)=p(Y|A,S,X)p(A|S,X;\sigma_A)p(S,X)$ describing our fair optimal policy setting. (b): A schema of the fair prediction system setting.}
\label{fig:setup}
\end{figure} 

We consider two constraints for controlling disparity in the downstream outcomes with respecct to $S$, which may be applicable for different use cases, depending on context and available actions: (i) a moderation breaking constraint, which aims at actively reducing disparity to the extent permitted by the action space; and (ii) an equal benefit constraint, which aims at equalizing the disparity of a new policy with that of the baseline policy in order to maintain the preferential treatment present in the baseline policy or to conservatively avoid introducing new disparity. We demonstrate the performance of our framework on two real-world based settings.

\section{Outcome Disparity Controlled Policy}\label{sec:setup}
We assume the setting represented by the causal graph in \figref{fig:setup}a, where the sensitive attribute $S$ corresponds to characteristics of an individual--such as race, gender, disabilities, sexual or political orientation--which we wish to protect against some measure of unfairness. We focus on discrete $S$, but the proposed methods can be adapted to settings with continuous $S$.

This causal graph encodes the following statistical independencies assumptions: 
(a) $\sigma_A \independent Y \,|\, \{A, S, X\}$; (b) $\sigma_A \independent \{S, X\}$. These assumptions are not restrictive, as they are satisfied in any setting in which the partial ordering among nodes is given by $\{S \cup X, \sigma_A, A, Y\}$. 
Assumption (a) enables us to learn the optimal policy based on historical data collected using a \emph{baseline policy} $p(A\,|\,S,X;\sigma_A=\emptyset)$ (observational-data regime), rather than by taking actions (interventional-data regime). Such a baseline policy represents the action allocation that was in place during the collection of the data pre-optimization. Assumption (b) allows us to compute the constraints with the proposed estimation methods below.

We denote with $Y_{\sigma_A}$ the \emph{potential outcome} random variable, which represents the outcome resulting from taking actions according to $p(A\,|\,S,X;\sigma_A)$ and has distribution equal to $p(Y;\sigma_A)=\int_{a,s,x} p(a, s, x, Y;\sigma_A)$. In the reminder of the paper, we indicate the baseline policy and potential outcome with $p(A\,|\,S,X;\emptyset)$ and $Y_\emptyset$ respectively.

The goal of the decision maker is to learn a parametrization $\sigma_A$ that maximizes the expectation of the potential outcome, $\mathbb{E}[Y_{\sigma_A}]$, while also controlling the disparity in $Y_{\sigma_A}$ across $S$ in the following ways:
(i) through a \emph{moderation breaking} (\ModBrk) constraint the aims at removing dependence of $\mathbb{E}[Y_{\sigma_A}]$ on $S$ as much as possible via what can be controlled, namely the allocation of $A$ as determined by $p(A|S, X; \sigma_A)$;
(ii) through an \emph{equal benefit} (\EqualBenefit) constraint that requires the distribution of $Y_{\sigma_A} - Y_{\emptyset}$ to be approximately equal across different values of $S$, ensuring that gain from the new policy is distributed equally.

\subsection{Disparity Control via \ModBrk~Constraint}
Without loss of generality\footnote{This can be seen by setting $f(s,x)=h(a,x)=0$.}, $\mu^Y(a,s,x):=\mathbb{E}[Y\,|\,a, s, x]$ can be written as
\begin{equation}
\mu^Y(a,s,x)= f(s, x) + g(a, s, x) + h(a, x),
\label{eq:decomposition}
\end{equation}
which leads to the following decomposition of 
$\mu^Y_{\sigma_A}(s,x):=\mathbb{E}[Y_{\sigma_A}\,|\,s, x]=\int_{a}\mu^Y(a,s,x)p(a\,|\,s,x;\sigma_A)$ 
\[
\mu^Y_{\sigma_A}(s,x) = f(s, x) +  g_{\sigma_A}(s, x) + h_{\sigma_A}(x),
\]
where $g_{\sigma_A}(s, x) := \mathbb E[g(A, s, x)~|~s, x; \sigma_A]$, and $h_{\sigma_A}(x) := \mathbb E[h(A, x)~|~s, x; \sigma_A]$. This decomposition contains (i) a component $f(s, x)$ that cannot be affected by $\sigma_A$, but which can contribute to disparity; (ii) a component $h_{\sigma_A}(x)$ that can be adjusted by $\sigma_A$ to increase expected outcomes, but which is not affected by $S$; and (iii) a component $g_{\sigma_A}(s, x)$ by which the choice of $\sigma_A$ can influence differences that are \emph{moderated} by $S$. Considering how $\mu^Y_{\sigma_A}(s,x)$ varies 
with $s$ as a measure of disparity suggests the constrained objective
\begin{align}
& \arg \max_{\sigma_A} \mathbb E[Y_{\sigma_A}] \quad \text{s.t.} \quad (\mathbb{E}[g_{\sigma_A}(s,X)~|~S=s] \!-\! \mathbb{E}[g_{\sigma_A}(\bar{s},X)~|~S=\bar{s}])^2\leq \epsilon, \forall s, \bar{s}. \label{eq:moderation-obj}
\end{align}
The slack $\epsilon$ is chosen based on domain requirements and the feasibility of the problem: central to the setup is that we work with a given space of policies, which is constrained by real-world phenomena and, in general, can only do so much to reduce disparity. 

\textbf{Motivation \& Suitability.}
\ModBrk~aims at removing $S$'s influence on the outcome. As such, it is appropriate when disparity is illegitimate. This is analogous to the appropriateness of the demographic parity constraint in fair prediction systems. 

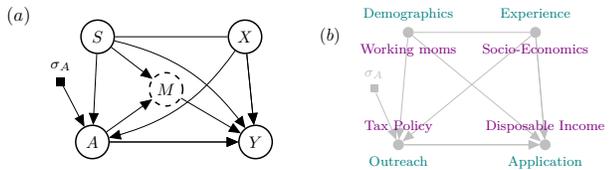
\begin{wrapfigure}{r}{0.6\textwidth}
\centering
\scalebox{0.7}{
\begin{tikzpicture}
\node[] (a) [] at (-5.8, 2.4) {$(a)$};
\node[obs,scale=0.9] (S) at (-4.3, 2) {$S$};
\node[obs,scale=0.9, style=dashed] (M) at (-3, 1) {$M$};
\node[obs,scale=0.9] (X) at (-1.5, 2) {$X$};
\node[obs,scale=0.9] (A) at (-4.4,0) {$A$};
\node[square,scale=0.9] (P)[label= north:$\sigma_A$] at (-5, 1.15) {};
\node[obs,scale=0.9] (Y) at (-1.3, 0) {$Y$};
\node[] (c) [] at (-5.8, -0.5) {};
\edge{S}{A};
\edge{P}{A};
\edge{A}{Y};
\edge{X}{Y};
\edge{S}{M};
\edge{M}{Y};
\edge{X}{Y};
\edge{A}{Y};
\edgebent[bend left=20]{S}{Y};
\edgebent[bend left=20]{X}{A};
\edge{A}{M};
\edge[-]{S}{X}
\end{tikzpicture}}
\quad
\scalebox{0.65}{
\begin{tikzpicture}
\node[] (b) [] at (-5.9, 2.22) {$(b)$};
\node[dot,scale=0.9, color=lightgray] (S) [label =north:\textcolor{teal}{Demographics}] [label=south:\textcolor{violet}{Working moms}] at (-4.3, 2.3) {};
\node[dot,scale=0.9, color=lightgray] (X) [label= north:\textcolor{teal}{Experience}][label=south:\textcolor{violet}{Socio-Economics}] at (-1.7, 2.3) {};
\node[dot,scale=0.9, color=lightgray] (A) [label= south:\textcolor{teal}{Outreach}][label=north:\textcolor{violet}{Tax Policy}] at (-4.5,0) {};
\node[square,scale=0.9, color=lightgray] (P)[label= north:\textcolor{lightgray}{$\sigma_A$}] at (-5, 1.15) {};
\node[dot,scale=0.9, color=lightgray] (Y) [label=south:\textcolor{teal}{Application}][label=north:\textcolor{violet}{Disposable Income}] at (-1.5, 0) {};
\begin{pgfonlayer}{back}
\edge[color=lightgray]{S}{A};
\edge[color=lightgray]{P}{A};
\edge[color=lightgray]{A}{Y};
\edge[color=lightgray]{X}{Y};
\edge[color=lightgray]{S}{Y};
\edge[color=lightgray]{X}{Y};
\edge[color=lightgray]{A}{Y};
\edge[color=lightgray]{X}{A};
\edge[-, color=lightgray]{S}{X}
\end{pgfonlayer}
\end{tikzpicture}}
\caption{(a): Causal graph providing further intuition for the \ModBrk~constraint. %
(b): Example use cases for the suggested constrains. See full description in the text.}
\label{fig:motivation}
\end{wrapfigure} 

However, \ModBrk~controls disparity via the separate actions and therefore to the extent permitted by the action space. Some insights into \ModBrk~can be gained by noticing that we are operating in the (estimated) true process that follows a particular $\sigma_A$. We marginalize the intermediate process between $(A, S, X)$ and the outcome $Y$, but implicitly assume that  actions change mediating events, such as $M$ in \figref{fig:motivation}a. The extent by which we can mitigate unfairness is a property of the real-world space of policies. For instance, we cannot interfere in the direct dependence between $S$ and $Y$ except for the moderating effect that $M$ might have under a new policy. This is in contrast to previous work, such as \citet{nabi2019learning}, which solves a planning problem in a ``projection'' of the real-world process where unfair information has been removed by some criteria. 

An example in which the \ModBrk~constraint would be suitable is that of a company looking to change its outreach campaign $A$ to maximize job applications $Y$ while mitigating current imbalances in their demographics $S$ and level of experience $X$ (\figref{fig:motivation}b, \textcolor{teal}{green labels}).
The company cannot control factors such as cultural preference among applicants for industry sectors, but can induce modifications by focusing recruiting efforts in events organized by minority groups in relevant conferences and by choosing recruiting strategies that do not interact with group membership.

\subsection{Disparity Control via \EqualBenefit~Constraint} 
We formalize the requirement that the distribution of $Y_{\sigma_A} - Y_{\emptyset}$ must be approximately equal across different values of $S$ using the cumulative distribution function (cdf) $\mathcal{F}$, leading to the constrained objective 
\begin{align}
& \arg \max_{\sigma_A} \mathbb{E}[Y_{\sigma_A}]\label{eq:statusquo-obj-nt} \quad \text{s.t.} \quad \mathcal{F}(Y_{\sigma_A} - \Ybase \,|\, S=s) = \mathcal{F}(Y_{\sigma_A} - \Ybase \,|\, S=\bar s), \forall s, \bar s.
\end{align}
In general, the distribution of $Y_{\sigma_A} - Y_\emptyset$ is not identifiable, due to the fact that the decision maker has access only to realization of $Y$ under the baseline policy. This problem is similar to the ``fundamental problem of causal inference'' in the context of hard interventions, for which bounding approaches have been suggested \citep{pearl2000causality, wu2019counterfactual,miratrix2018bounding}. 
Similarly, we propose matching upper bounds and lower bounds of the cdf. 

\textbf{Motivation \& Suitability.} \EqualBenefit~aims at not increasing disparity compared to the baseline policy.  
The \EqualBenefit~constraint would be appropriate in two scenarios. The first scenario is when we would like to keep the disparity present in the baseline policy when moving to the new one, since it includes legitimate disparity or desirable preferential treatment. This is a similar motivation to the equalized odds constraint in fair prediction systems. 
This scenario would arise, for example, when wishing to design an income taxation system to optimize disposable income or savings levels (\figref{fig:motivation}b, \textcolor{violet}{purple labels}). Disparities in $Y$ across gender or racial groups $S$ may exist, as well as possibly related individual characteristics $X$, like socio-economic status, employment type and housing situation. The decision maker should take into account and maintain desirable preferential treatments present in the baseline policy, for example, for working moms.

The second scenario is when we would like to not introduce greater disparity in a new policy, acknowledging the limitations of the action space. This scenario would arise, e.g., when wishing to perform a software update of a medical or well-being app that includes different designs, with users belonging to different age groups. The decision maker might want to ensure that the new policy does not change existing levels of difference in health or wellbeing across groups, which relates to the usage of the app. The app use, as well as the downstream outcome $Y$ itself, has pre-existing relations to age groups $S$ as well as to other associated user characteristics $X$, but the decision maker wants the update rollout to not exacerbate the already existing gap in outcome under the baseline policy. The decision maker may not want to forcefully reduce disparities in outcome in such a case -- it is likely that younger age groups would be more responsive and happier with a software update, and have higher levels of the downstream outcome $Y$ overall; the choice of policy can only do so much to change that.

\section{Relation to Prior Work}\label{sec:related_works}

While more attention has been given to the problem of designing fair prediction systems, some works have considered the problem of designing fair optimal policies. 
Like us, \citet{nabi2019learning} use observational data, but in order to learn a policy as if particular path-specific effects between $S$ and $A$ and $S$ and $Y$ were completely deactivated. They do not place any constraints on the distribution of $Y_{\sigma_A}$ given $S$ and $X$, require complex counterfactual computations, and aim to achieve a notion of fair policy which targets specific causal subpaths. Critically, they rely on a manipulation of $S$, and do not have an equivalent to our pragmatic view, asking what can be done with a set of available actions. \citet{chohlaswood2022learning} consider a more general utility function than ours as the optimization objective but focuses on enforcing a notion similar to demographic parity on the choice of the policies, rather than considering a fairness notion on the outcomes. 
The most similar work to ours is the one from \citet{kusner2019making}, still in the observational data regime. Crucially, this work is different in its motivation: it aims to extend the work in \citet{kusner2017counterfactual} to the policy setting, and thus relies on manipulation of $S$. Further, it consists mostly of budget treatment allocations and interference problems. It does not have our pragmatic view, and does not consider parameterized policy spaces that take into account a covariate vector $X$. Due to the difference in their disparity definitions, these methods would not be directly comparable to ours. There is also a growing literature on fair policy optimization in online settings, but those deal with sequential decisions and interventional data, which are fundamentally different from ours (see Appendix~\ref{app:related_works}). 
\begin{figure*}[t]
\captionsetup[subfigure]{labelformat=empty}
\centering
\begin{minipage}{6.3cm}
\scalebox{0.75}{\subfloat[Phase I]{
\begin{tikzpicture}
\node[] (a) [] at (-0.5, 1.7) {$(a)$};
  \node[obslight,scale=0.9, color=gray, text=white] (s) at (-0.5, -3.5) {$s$};
  \node[obslight,scale=0.9, color=gray, text=white] (x) at (.5, -3.5) {$x$};
  \node[obslight,scale=0.9, color=gray, text=white] (a) at (1.5, -3.5) {$a$};
  \node[platelight, fill=teal, text=white] (f_) at (-0.5, -2) {$f$};
  \node[platelight, fill=teal, text=white] (g) at (0.5, -2) {$g$};
  \node[platelight, fill=teal, text=white] (h) at (1.5, -2) {$h$};
  \node[det, fill=violet, text=white] (+) at (0.5, -1) {+};
  \node (Y_pi_cond) at (0.5, 0) {$\mu^Y$};
  \edge[-]{s}{f_};
  \edge[-]{x}{f_};
  \edge[-]{s}{g};
  \edge[-]{x}{g};
  \edge[-]{a}{g};
  \edge[-]{x}{h};
  \edge[-]{a}{h};
  \edge[-]{f_}{+};
  \edge[-]{g}{+};
  \edge[-]{h}{+};
  \edge[-]{+}{Y_pi_cond};
  \tikzset{bayesplate caption/.append style={below left=5pt and -20pt of #1.north west}}
  \bayesplate {plate1} {(f_)(g)(h)(+)}{$\text{NN}^Y$};%
\end{tikzpicture}} 
\quad
\subfloat[Phase II]{
  \begin{tikzpicture}  
  \node[obslight,scale=0.9, fill=gray, text=white] (s1) at (6, -5.2) {$s$};
  \node[obslight, scale=0.9, fill=gray, text=white] (x1) at (7, -5.2) {$x$}; 
  \node[platelight, fill=teal, text=white] (m) at (8.2, -4.3) {$\text{MLP}^A_{\sigma_A}$};
  \node[obslight,scale=0.9, fill=gray, text=white] (a) at (8.2, -3.4) {$a$};
  \node[platelight, fill=etonblue, text=white] (f_) at (6, -2) {$f$};
  \node[platelight, fill=etonblue, text=white] (g) at (7, -2) {$g$};
  \node[platelight, fill=etonblue, text=white] (h) at (8, -2) {$h$};
  \node[det, fill=violet, text=white] (+) at (7, -1) {+};
  \node (Y_pi_cond) at (7, 0) {$\mu^Y_{\sigma_A}$};
  \edge[-]{s1}{f_};
  \edge[-]{x1}{f_};
  \edge[-]{s1}{g};
  \edge[-]{x1}{g};
  \edge[-]{a}{g};
  \edge[-]{x1}{h};
  \edge[-]{a}{h};
  \edge[-]{s1}{m};
  \edge[-]{x1}{m};
  \edge[-]{m}{a};
  \edge[-]{f_}{+};
  \edge[-]{g}{+};
  \edge[-]{h}{+};
  \edge[-]{+}{Y_pi_cond};
  \tikzset{bayesplate caption/.append style={below left=5pt and -20pt of #1.north west}}
  \bayesplate {plate1} {(f_)(g)(h)(+)}{$\text{NN}^Y$};%
\end{tikzpicture}}}
\end{minipage}
\quad
\begin{minipage}{6.3cm}
\scalebox{0.7}{
\subfloat[Phase I]{
\begin{tikzpicture}
\node[] (b) [] at (-0.5, 3) {$(b)$};
  \node[obslight,scale=0.8, color=gray, text=white] (s) at (0, -2.5) {$s$};
  \node[obslight,scale=0.8, color=gray, text=white] (x) at (.7, -2.5) {$x$};
  \node[obslight,scale=0.8, color=gray, text=white] (a) at (1.4, -2.5) {$a$};
  \node[platelight, fill=teal, text=white] (f_) at (1.25, -1) {\text{MLP}$^Y$};
  \node[platelight, fill=teal, text=white] (f+) at (0, -1) {\text{MLP}$^A_{\emptyset}$};
  \node (Y_pi_cond) at (1.25, 0) {$\mu^Y$};
  \node (a_mu) at (0, 0) {$\mu^A_{\emptyset}$};
  \edge[-]{s}{f_};
  \edge[-]{x}{f_};
  \edge[-]{s}{f+};
  \edge[-]{x}{f+};
  \edge[-]{a}{f_};
  \edge[-]{f_}{Y_pi_cond};
  \edge[-]{f+}{a_mu};
\end{tikzpicture}} 
\qquad
\subfloat[Phase II]{ 
\begin{tikzpicture}  
  \node[obslight,scale=0.8, fill=gray, text=white] (s1) at (3.1, -6) {$s$};
  \node[obslight, scale=0.8, fill=gray, text=white] (x1) at (3.8, -6) {$x$}; 
  \node[platelight, fill=teal, text=white] (m) at (2.3, -4.5) {$\text{MLP}^A_{\sigma_A}$};
  \node[platelight, fill=etonblue, text=white] (mlpa) at (0.8, -4.5) {$\text{MLP}^A_{\emptyset}$};
  \node[obslight,scale=0.8, fill=gray, text=white] (mua') at (2, -3.5) {$\mu^A_{\sigma_A}$};
  \node[obslight,scale=0.8, fill=gray, text=white] (mua) at (1.2, -3.5) {$\mu^A_{\emptyset}$};
  \node[obslight,scale=0.8, fill=gray, text=white] (a) at (0.5, -3.5) {$a$};
  \node[obslight,scale=0.8, fill=gray, text=white] (y) at (2.7, -3.5) {$y$};
  \node[platelight, fill=etonblue, text=white] (g) at (3.7, -2.3) {\text{MLP}$^Y$};
  \node[obslight,scale=0.9, fill=gray, text=white] (muy) at (3.7, -1.4) {$\mu^Y$};
  \node (f_const) at (3.3, -0.5) {$f_s^L,f_s^U$};
  \node (Y_pi_cond) at (1.4, -2.3) {$y \frac{p(a\,|\,s, x;\sigma_A)}{p(a\,|\,s, x;\sbase)}$};
  \edge[-]{s1}{m};
  \edge[-]{x1}{m};
  \edge[-]{s1}{mlpa};
  \edge[-]{x1}{mlpa};
  \edge[-]{m}{mua'};
  \edge[-]{mlpa}{mua};
  \edge[-]{mua}{Y_pi_cond};
  \edge[-, dashed]{mua}{g};
  \edge[-]{x1}{g};
  \edge[-]{s1}{g};
  \edge[-, dashed]{mua'}{g};
  \edge[-]{mua'}{Y_pi_cond};
  \edge[-]{a}{Y_pi_cond};
  \edge[-]{y}{Y_pi_cond};
  \edge[-]{g}{muy};
  \edge[-]{muy}{f_const};
  \edge[-]{y}{f_const};
  \tikzset{plate caption/.append style={above left=0pt and -2pt of #1.north west, font=\footnotesize}}
  \bayesplate {plate1} {(Y_pi_cond)}{IPW};%
  \tikzset{plate caption/.append style={above=5pt of #1.south east}}
  \bayesplate {plate1} {(f_const)}{Constraint};%
\end{tikzpicture}}}
\end{minipage}
\caption{(a) Training phases for \ModBrk{}. (b) Training phases for \EqualBenefit{}. Gray circles: inputs. Teal blocks: parameter layers. Light green blocks: fixed parameters. Purple diamonds: additive gates. Dashed edge: alternating inputs.}
\label{fig:architectures}
\end{figure*}
\section{Method}\label{sec:method}

In this section, we describe how the \ModBrk~and \EqualBenefit~constrained objectives (\ref{eq:moderation-obj}) and (\ref{eq:statusquo-obj-nt})  
can be estimated from an observational dataset ${\cal D}=\{a^i,s^i,x^i,y^i\}_{i=1}^N$, $(a^i,s^i,x^i,y^i) \sim p(A,S,X,Y;\sbase)$, and introduce a method for optimizing $\sigma_A$ using neural networks for each constraint. In both cases, we enforce the constraints via an augmented Lagrangian approach, casting them as inequality constraints controlled by a slack value $\epsilon$ (see Appendix~\ref{app:aug_lagrange}). Different choices of $\epsilon$ lead to different trade-offs between utility and constraints, similarly to varying Lagrange multiplier values.

\subsection{\ModBrk~Constraint}\label{sec:modbrk_method}
For the \ModBrk~constrained objective (\ref{eq:moderation-obj}), we learn a deterministic policy using an MLP neural network $\text{MLP}^A_{\sigma_A}$, i.e. $p(A\,|\,s, x ; \sigma_A)=\delta_{A=\text{MLP}^A_{\sigma_A}(s,x)}$, where $\delta$ denotes the delta function and $\text{MLP}^A_{\sigma_A}(s,x)$ the output of $\text{MLP}^A_{\sigma_A}$ given input $s,x$. This gives $\mu^Y_{\sigma_A}(s,x)=\mathbb{E}[Y_{\sigma_A}|s,x]=\int_{a}\mu^Y(a,s,x)p(a\,|\,s,x;\sigma_A) = \mu^Y(\text{MLP}^A_{\sigma_A}(s,x),s,x)$.

We model $\mu^Y(a,s,x)= f(s, x) + g(a, s, x) + h(a, x)$ using a structured neural network $\text{NN}^Y$ that separates into the three components $f, g, h$ reflecting the decomposition of $\mu^Y(a,s,x)$. We learn the parameters of $\text{MLP}^A_{\sigma_A}$ and $\text{NN}^Y$ in two phases, as outlined in \figref{fig:architectures}a. In Phase I, we estimate the parameters of $\text{NN}^Y$ from ${\cal D}$.  
In Phase II, we learn the parameters of $\text{MLP}^A_{\sigma_A}$ by optimizing objective (\ref{eq:moderation-obj}) with $\mathbb{E}[Y_{\sigma_A}] \approx \frac{1}{N}\sum_{i=1}^N \mathbb{E}[Y_{\sigma_A}\,|\,s^i,x^i]=\frac{1}{N}\sum_{i=1}^N \mu^Y(\text{MLP}^A_{\sigma_A}(s^i,x^i),s^i,x^i)$, using the $\text{NN}^Y$ trained in Phase I. If all variables are discrete, after estimating $\mu^Y(a,s,x)$ and $p(s,x)$, computing (\ref{eq:moderation-obj}) can be cast as an LP (see Appendix \ref{app:ModBrk_LP}). Consistency is then a standard result which follows immediately. Notice that this formulation can also be extended to continuous $S$ by defining the constraint via partial derivative, $\mathbb{E}\left[\left|\frac{\partial{g_{\sigma_A}(s, X)}}{\partial{s}}\right| ~\middle|~S=s \right] \leq\ \epsilon$.

\textbf{Action clipping.}
We suggest ensuring overlap of $p(a|s,x;\sigma_A)$ with $p(a|s, x;\emptyset)$ by adaptively constraining the output of $\text{MLP}^A_{\sigma_A}$ to be within an interval that resembles the one observed under the baseline policy. We consider two options: (a) matching the minimal and maximal value $a$ seen for each $s,x$ combination (binning continuous elements) and (b): extend that interval according to the difference between the minimal and maximal $a$ value seen with each $s, x$. In the experiments, we opted for constraining the output of $\text{MLP}^A_{\sigma_A}$ to be in the interval $[\min_{A_{S,X}} - \eta \text{gap}_{A_{S,X}}, \max_{A_{S,X}} + \eta \text{gap}_{A_{S,X}}]$, where $\text{gap}_{A_{S,X}} = \max_{A_{S,X}}-\min_{A_{S,X}}$. We enforced this interval with a shifted Sigmoid function in the last layer of $\text{MLP}^A_{\sigma_A}$. We set $\eta=1$ to allow some extrapolation and increase in utility.

\subsection{\EqualBenefit~Constraint}\label{sec:eqb_method}
For the \EqualBenefit~constrained objective (\ref{eq:statusquo-obj-nt}), we require knowledge of the baseline policy as well as creating upper and lower bounds on the cdf $\mathcal{F}(Y_{\sigma_A} - \Ybase \,|\, S=s), \forall s$. 
$Y_{\sigma_A} - \Ybase$ is a counterfactual quantity, which is not generally identifiable. In fact, we do not have access to the joint distribution of $Y_{\sigma_A}$ and $\Ybase$ as we never observed both variables simultaneously for the same individual. However, we show next that if one is willing to make a parametric assumption, then $Y_{\sigma_A} - \Ybase$ is partially identifiable and amenable to bounding. We assume that the baseline policy is Gaussian and only consider Gaussian policies, with equal homogeneous variance, i.e. $p(A|s, x;\sbase)=\mathcal{N}(\mu^A_{\emptyset}(s, x),V^A)$, $p(A|s, x;\sigma_A)=\mathcal{N}(\mu^A_{\sigma_A}(s, x),V^A)$. In addition, we assume that $Y_{\sigma_A}$ and $Y_\emptyset$ are jointly Gaussian conditioned on $S,X$, with marginal means $\mu^Y_{\sigma_A}(s, x)$, $\mu^Y_\emptyset(s, x)$ and equal homogenous marginal variance $V^Y$. 

The assumption on $Y_{\sigma_A}$ and $Y_{\emptyset}$ enables us to bound the population cdf by maximizing/minimizing it with respect to the unknown correlation coefficient $\rho(s, x)\in [-1, 1]$, where $ \rho(s, x):= \frac{Cov(Y_{\sigma_A}, Y_\emptyset~|~s, x)}{V^Y}$. Let $\Phi$ denote the cdf of the standard Gaussian. We can write
\begin{align*}
&\mathbb{P}(Y_{\sigma_A} - \Ybase \leq z\,|\,x, s) = \Phi\left(\frac{z - \mu^Y_{\sigma_A}(s, x) +\mu^Y_{\emptyset}(s, x)}{\sqrt{2V^Y (1- \rho(s,x))}}\right).
\vspace{-0.2cm}
\end{align*}
Defining $\mu(s,x) := \mu^Y_{\sigma_A}(s, x) -\mu^Y_{\emptyset}(s, x)$ and
$f^z_{s,x}(\rho):=\Phi\left(\frac{z - \mu(s,x)}{\sqrt{2V^Y (1 - \rho)}}\right)$,
we can show that, for any $s, x$, the following bounds are the tightest: (i) $f^z_{s,x}(-1) \leq \mathbb{P}(Y_{\sigma_A} - Y_{\emptyset} \leq z \,|\,x, s) \leq f^z_{s,x}(1), \text{for } z - \mu(s,x)>0$;
(ii) $f^z_{s,x}(1) \leq \mathbb{P}(Y_{\sigma_A} - Y_{\emptyset} \leq z\,|\,x, s) \leq f^z_{s,x}(-1), \text{for } z - \mu(s,x) < 0$.

Using $N_s$ to indicate the number of elements in $\cal D$ with $s^i=s$, we obtain global lower and upper bounds estimates for $\mathbb{P}(Y_{\sigma_A} - Y_{\emptyset} \leq z \,|\,s)$ as $F_s^L(z) = \frac{1}{N_s} \sum_{i: s^i = s} f^z_{s, x^i}(-\text{sign}(z - \mu(s,x^i))) $ and $F_s^U(z) =  \frac{1}{N_s} \sum_{i: s^i = s} f^z_{s, x^i}(\text{sign}(z - \mu(s,x^i)))$.  

We operationalize the constraint by minimizing the mean squared error (MSE) of the bounds differences, i.e. our final objective is
\begin{align}
& \arg\max_{\sigma_A} \mathbb E[Y_{\sigma_A}]
     \quad \text{s.t.} \quad \sum_{z \in S_z} \Big( || F_s^L(z) - F_{\bar s}^L(z) ||_2^2  + 
     || F_s^U(z) - F_{\bar s}^U(z) ||_2^2 \Big) \leq \epsilon,
     \label{eq:statusquo-obj-final}
\end{align}
$\forall s,\bar s$, where $S_z$ is a grid of values. 

We propose to estimate $\mathbb{E}[Y_{\sigma_A}]$ with the inverse probability weighting (IPW) estimator. This is an alternative estimation approach to the one used for \ModBrk, to demonstrate the modularity of our approach and constraints.
\vspace{-3ex}
\begin{align*}
\mathbb{E}[Y_{\sigma_A}] 
=\int_{a,s,x,y} y \frac{p(a\,|\,s, x;\sigma_A)}{p(a\,|\,s, x;\sbase)} p(a,s,x,y;\sbase) \approx
\frac{1}{N}\sum_{i=1}^N y^i \frac{p(a^i\,|\,s^i, x^i;\sigma_A)}{p(a^i\,|\,s^i, x^i;\sbase)}.
\end{align*}%
We model $\mu^A_{\emptyset}(s,x)$, $\mu^A_{\sigma_A}(s,x)$ and $\mu^Y(a,s,x)=\mathbb{E}[Y\,|\,a,s,x]$ using MLP neural networks $\text{MLP}^A_\emptyset$, $\text{MLP}^A_{\sigma_A}$, and $\text{MLP}^Y$ respectively. We learn the parameters of these networks  in two phases as outlined in \figref{fig:architectures}b. In Phase I, we learn the parameters of $\text{MLP}^A_\emptyset$ and $\text{MLP}^Y$ from $\cal D$ with MSE losses between predicted and observed actions and outcome. 
We obtain a MAP estimate of $\mu^Y_{\emptyset}(s,x)=\int_a \mu^Y(a,s,x)p(a|s,x,\emptyset)$ as $\hat{\mu}^Y_\emptyset(s, x) = \int_a \mu^Y(a,s,x)\delta_{A = \mu^A_{\emptyset}(s, x)}$. We estimate $V^A$ and $V^Y$ through averaging the MSE of target and predicted mean output from $\text{MLP}^A$ and $\text{MLP}^Y$ respectively \footnote{We assume $V^Y$ to be homoskedastic.}. In Phase II, we learn the parameters of $\text{MLP}^A_{\sigma_A}$ that maximize objective (\ref{eq:statusquo-obj-final}) using the $\text{MLP}^A_\emptyset$ and $\text{MLP}^Y$ trained in Phase I.

\textbf{Misspecification and parametric restrictiveness.} 
Notice that, as the Gaussianity assumption is placed on the conditional joint distribution of $(Y_{\sigma_A}, Y_{\emptyset})$ given $S$ and $X$ rather than on  the marginal distribution. It is thus reasonable to invoke the central limit theorem and assume that, given enough samples, the residuals of $Y_\emptyset|s, x$ and $Y_{\sigma_A}|s, x$ are normally distributed, since one can view the residuals as aggregating various minor factors that explain the remaining variability after taking into account a strong signal. Also notice that our proposed approach can be easily extended to heteroskedastic Gaussian regression models which are very flexible and can accommodate many real-world datasets. Finally, a nonparametric approach for the constraint computation would be possible e.g. via Fréchet bound (see Appendix~\ref{app:eqb}), but that may come at a computational cost.
To illustrate the effects of misspecification on our proposed assumption, we present below a formal result showing that our constraint estimation error is bounded from above by how the true joint density deviates from the Gaussian density (see Appendix~\ref{app:eqb} for a proof and discussion). 

\begin{prop}
\label{prop:model_misspecification}
For fixed $s, x$, let $Q = \mathbb{P}(Y_{\sigma_A} - Y_\emptyset \leq z \,|\,s, x)$ and our estimator $\hat{Q} = \Phi\left(\frac{z - \mu_{\sigma_A}^Y(s, x)+ \mu_\emptyset^Y(s, x)}{\sqrt{2V^Y(1-\rho(s, x))}}\right)$. Let $f_{s, x}$ denote the joint density function of $Y_{\sigma_A},  Y_\emptyset\,|\,s, x$ and $\phi_{s, x}$ the joint Gaussian density function with mean $[\mu^Y_{\sigma_A}(s, x),  \mu^Y_{\emptyset}(s, x)]$ and shared variance $V$ with correlation $\rho(s, x)$. Then $\left|Q - \hat{Q}\right|  \leq \left\Vert f_{s, x} - \phi_{s, x}\right\Vert_1$, where $||\cdot||_1$ denotes the $L_1$ norm.
\end{prop}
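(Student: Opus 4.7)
The plan is to write both $Q$ and $\hat Q$ as integrals of the same indicator function against the two joint densities $f_{s,x}$ and $\phi_{s,x}$, so that their difference is controlled in a single step by the $L_1$ distance between the densities. Concretely, for fixed $s,x$ define the event
\[
E_z = \{(y_1, y_2) \in \mathbb{R}^2 : y_1 - y_2 \leq z\}.
\]
Then by definition
\[
Q = \int \mathbb{1}_{E_z}(y_1, y_2)\, f_{s,x}(y_1, y_2)\, dy_1\, dy_2,
\]
and the first step is to show that the estimator admits the analogous representation
\[
\hat Q = \int \mathbb{1}_{E_z}(y_1, y_2)\, \phi_{s,x}(y_1, y_2)\, dy_1\, dy_2.
\]

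To check this, I would use the fact that under the bivariate Gaussian $\phi_{s,x}$ with means $\mu^Y_{\sigma_A}(s,x), \mu^Y_\emptyset(s,x)$, common variance $V^Y$, and correlation $\rho(s,x)$, the linear combination $Y_{\sigma_A} - Y_\emptyset$ is univariate Gaussian with mean $\mu^Y_{\sigma_A}(s,x) - \mu^Y_\emptyset(s,x)$ and variance $V^Y + V^Y - 2\rho(s,x) V^Y = 2V^Y(1-\rho(s,x))$. Standardizing then yields exactly the formula for $\hat Q$ stated in the proposition. This is a short computation but the only place where the specific Gaussian parametrization enters the argument.

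With these two representations in hand, the final step is a direct absolute-value / triangle-inequality bound:
\[
\left|Q - \hat Q\right| = \left| \int \mathbb{1}_{E_z}(y_1, y_2)\, \bigl(f_{s,x}(y_1, y_2) - \phi_{s,x}(y_1, y_2)\bigr)\, dy_1\, dy_2 \right|
\]
\[
\leq \int \mathbb{1}_{E_z}(y_1, y_2)\, \bigl|f_{s,x}(y_1, y_2) - \phi_{s,x}(y_1, y_2)\bigr|\, dy_1\, dy_2 \leq \bigl\| f_{s,x} - \phi_{s,x} \bigr\|_1,
\]
using $\mathbb{1}_{E_z} \leq 1$ in the last step. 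This is essentially the standard bound that any event probability differs by at most the total variation of the two laws.

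I do not anticipate a serious obstacle: once the Gaussian marginal-of-a-difference calculation is dispatched, the bound is a one-line triangle inequality. The only subtlety worth flagging in the write-up is that the statement bounds the full $L_1$ distance (no factor $1/2$), which is why I use the crude bound $\mathbb{1}_{E_z} \leq 1$ rather than the sharper total-variation form; this is what the proposition asserts and is sufficient for the misspecification discussion that motivates it.
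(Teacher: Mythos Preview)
Your proposal is correct and follows essentially the same idea as the paper's proof: represent both $Q$ and $\hat Q$ as probabilities of the same half-plane event under the two joint densities, then bound the difference by the $L_1$ distance. The paper reaches the same inequality slightly less directly, first deriving the density of a difference $W=X-Y$ and then integrating, whereas you bypass that step with the indicator representation; either route yields the stated bound with no gaps.
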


\section{Experiments}\label{sec:experiments}
We evaluated the \ModBrk~and \EqualBenefit~constraint methods\footnote{The code reproducing our results is available at \url{https://github.com/limorigu/PragmaticFairness}} on the New York City Public School District (NYCSchools) dataset compiled in \citet{kusner2019making}, which we augmented to include actions; and further tested \ModBrk~on the Infant Health and Development Program (IHDP) dataset, specifically the real-data example examining dosage effects described in Section 6.2 of \citet{hill2011bayesian} (we could not use this dataset for \EqualBenefit~due to its reliance on parametric assumptions).  
Below, we briefly discuss the datasets and defer further details to Appendix \ref{app:datasets}.

\begin{figure*}[t!]
\centering 
\subfloat[\ModBrk~on NYCSchools \label{fig:mod_brk_NYC}]{\includegraphics[width=0.25\textwidth]{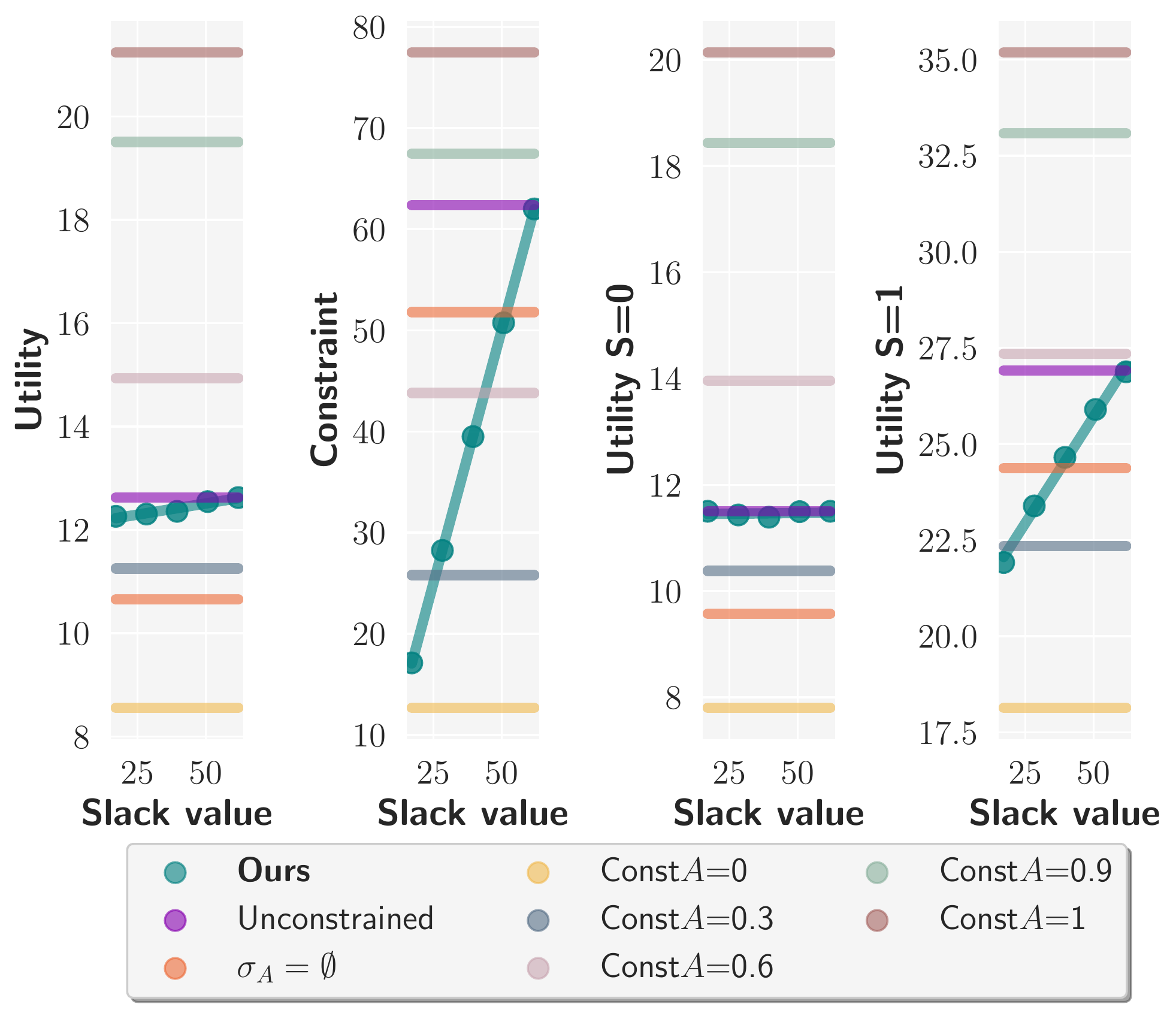}}\hfil 
\subfloat[\EqualBenefit~on NYCSchools \label{fig:fair_impact_NYC}]{\includegraphics[width=0.265\textwidth]{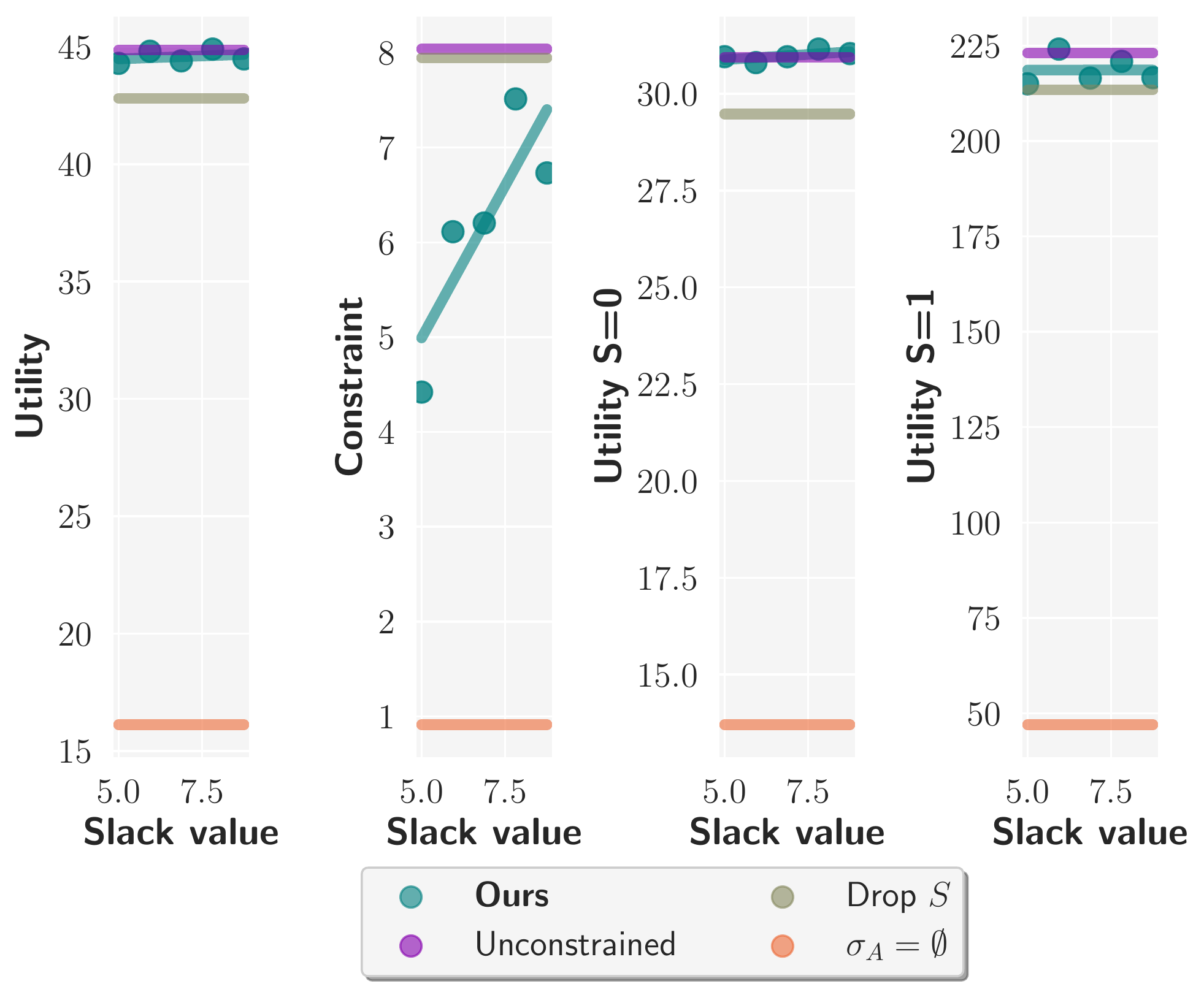}}\hfil 
\subfloat[\ModBrk~on IHDP \label{fig:mod_brk_IHDP} ]{\includegraphics[width=0.25\textwidth]{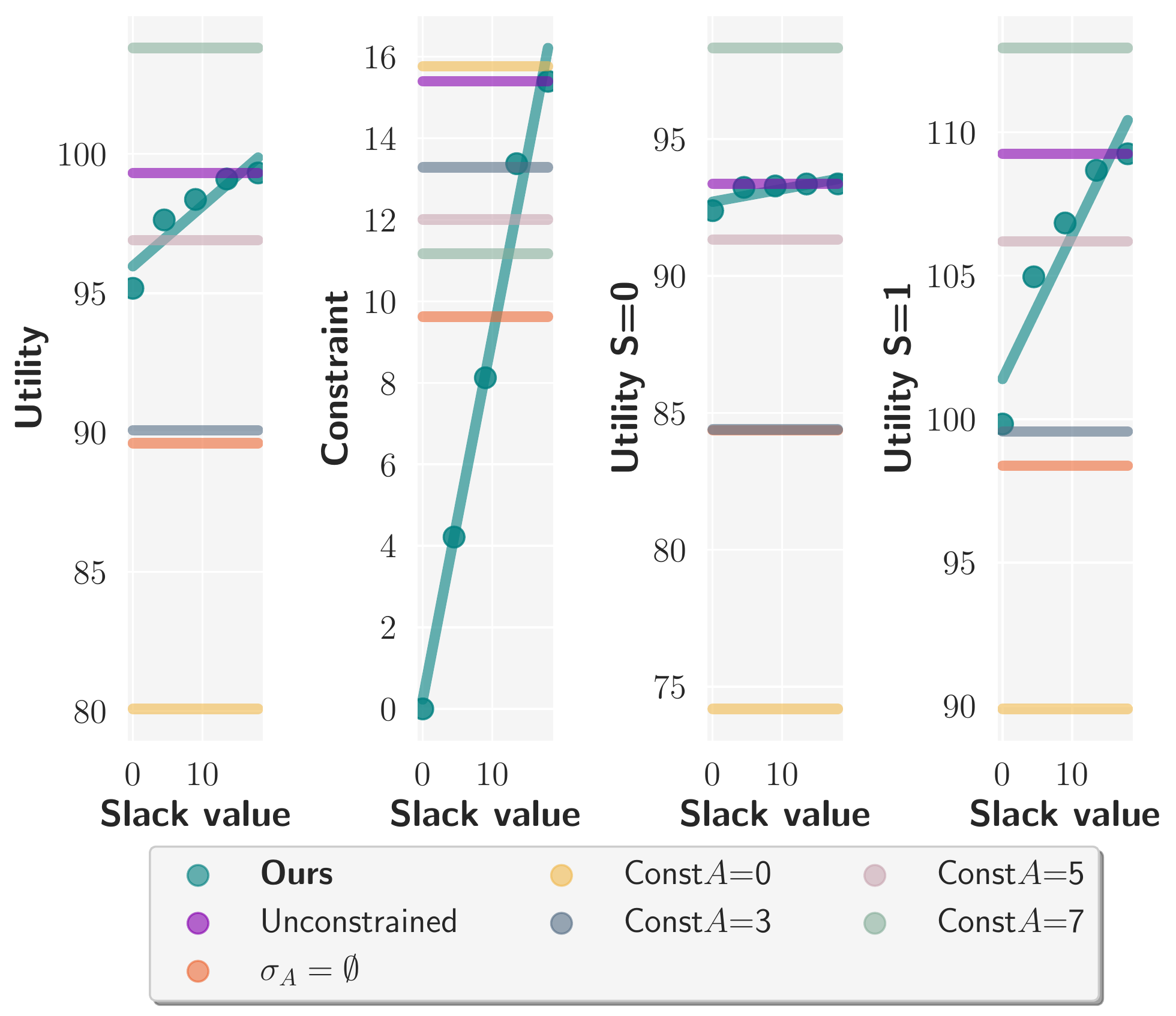}}

\subfloat[\ModBrk~on NYCSchools, recommended actions \label{fig:mod_brk_NYC_actions}]{\includegraphics[width=0.23\textwidth]{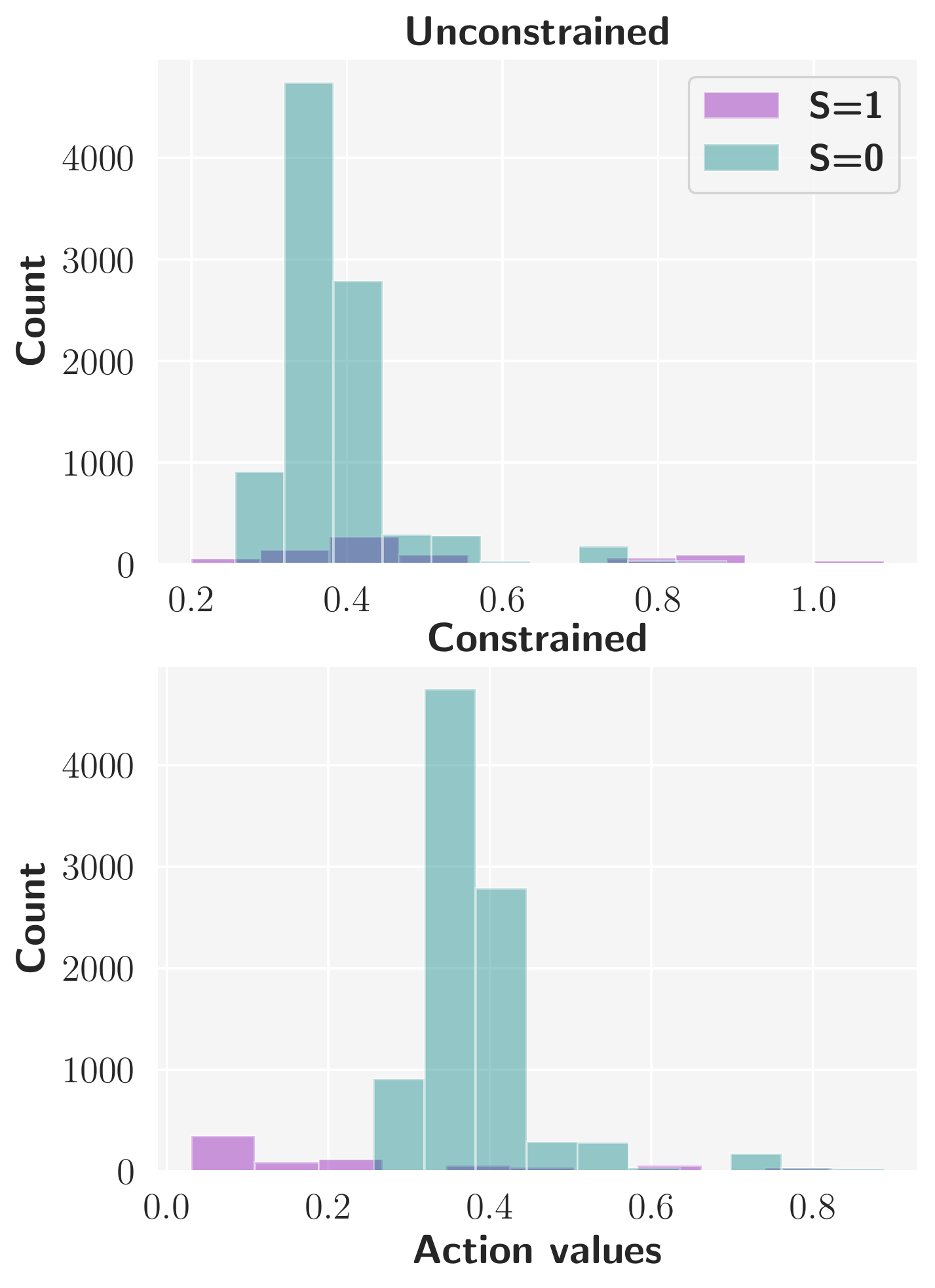}} \hfil
\subfloat[\EqualBenefit~on NYCSchools, recommended actions \label{fig:fair_impact_NYC_actions}]{\includegraphics[width=0.23\textwidth]{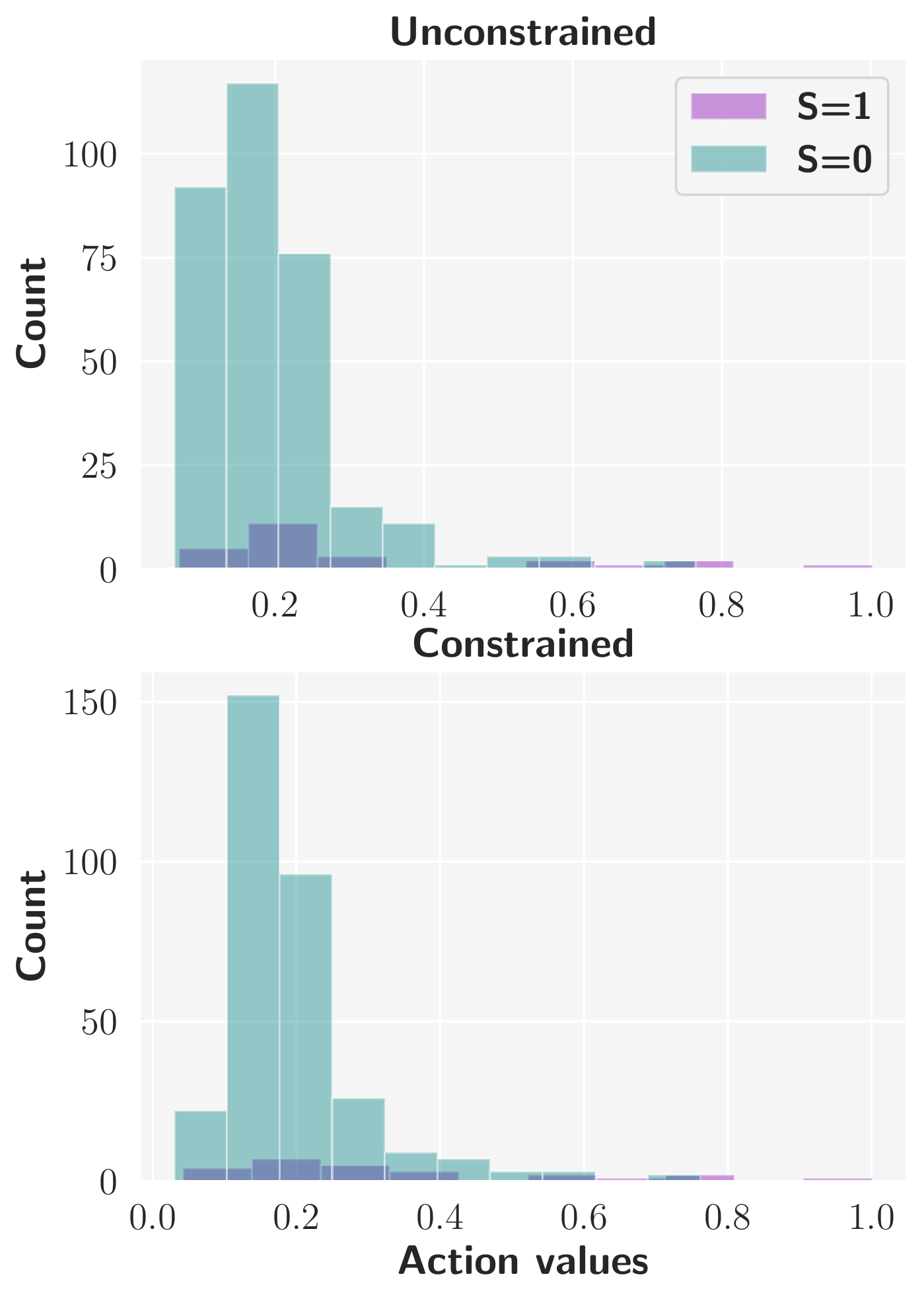}} \hfil   
\subfloat[\ModBrk~on IHDP, recommended actions \label{fig:mod_brk_IHDP_actions} ]{\includegraphics[width=0.23\textwidth]{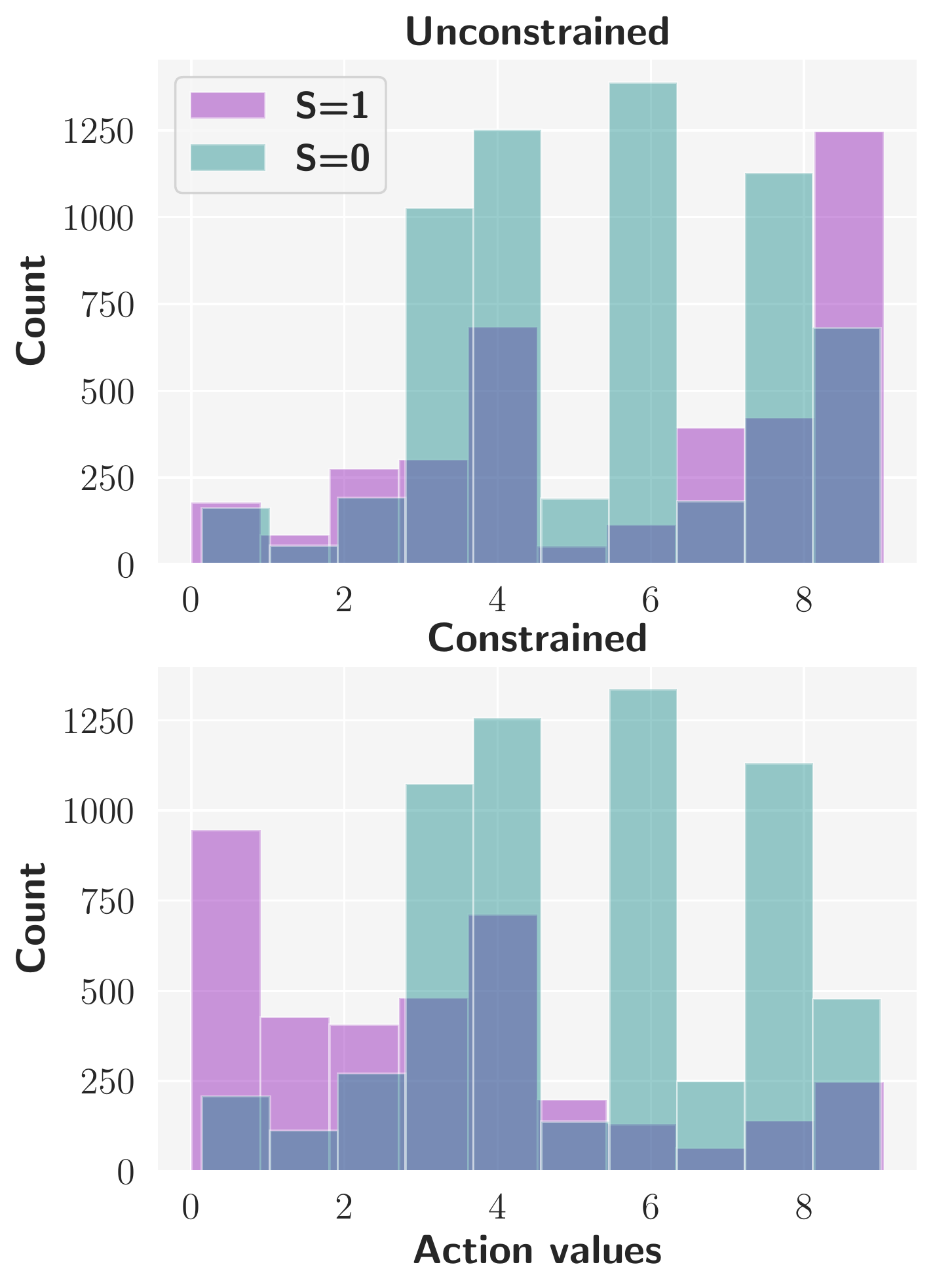}}
\caption{(Top) influence of the slack value on obtained values of the objective function and fairness constraint violations. (Bottom) recommended actions by group for unconstrained optimization and for constrained optimization, where the slack on fairness violation set to 0 or the highest level shown in the top plot, respectively.} \label{fig:results}

\end{figure*}

\textbf{Action-augmented NYCSchools dataset.}
We adopted the same sensitive attribute and covariates as \citet{kusner2019making}, and augmented the dataset with generated actions and outcomes. We created continuous actions corresponding to funding level decisions as $A = (w^T_{SX} SX)^2 + \max(0,w^T_{X} X) + \mathcal{N}(0.5, 0.4)$, where $E$ is the original percent of students taking the SAT/ACT exams (pre-college entry) in the dataset. The proposed form of $Y$ is discussed in Appendix~\ref{app:datasets}.

\textbf{IHDP dataset.}
The IHDP dataset describes a program that targeted low-birth-weight, premature infants, providing them with intensive high-quality child care and home visits from a trained provider. As continuous action $A$ we used the self-selected number of participation days in the program.
The outcome $Y$ corresponds to child score attainment in cognitive tests at age three. We considered mother's race (white vs. non-white) as sensitive attribute $S$. We reinterpreted the original setting as a resource allocation problems as follows: rather than as a self-decision, we view the number of days in treatment as external, e.g., by assigning different individuals to different lengths of participation. In this case, the \ModBrk~constraint goal is to break the moderation of the allocation of days in program by the group membership, such that the resource allocation policy is not responsible for the difference in scores attained by different groups. 

\subsection{Results}
We evaluated \ModBrk~ and \EqualBenefit~ by comparing these methods with:
(1) optimizing the policy with no disparity constraint (Unconstrained), 
(2) optimizing the policy without using $S$ (Drop $S$), and
(3) using the baseline policy ($\sigma_A = \emptyset$).
As discussed in Section~\ref{sec:setup}, previous work on fair policy or action allocation differs from our setting and objective, and therefore cannot be meaningfully compared. For \ModBrk{}, we also compare against different levels of constant actions ($\text{Const}_{A}$). We do not include it for \EqualBenefit{} since the IPW estimator is not suitable for $\delta$-distributions. The range of $\epsilon$ values in the plots was determined such that the minimal $\epsilon$ slack values correspond to the smallest constraint value achievable with our optimization strategy (this is the lowest possible constraint value achieved when setting $\epsilon=0$); the maximal $\epsilon$ slack value is the one that would closely match the constraint violation under unconstrained optimization. In practice, we propose to explore the frontier resulting from choices of slack values and select the most acceptable trade-off for the user.

\textbf{\ModBrk~Constraint Method.}
The results for \ModBrk~on the NYCSchools and IHDP datasets are presented in Fig.~\ref{fig:mod_brk_NYC}, \ref{fig:mod_brk_NYC_actions} and in Figs.~\ref{fig:mod_brk_IHDP}, \ref{fig:mod_brk_IHDP_actions}  respectively\footnote{Dropping $S$ had no influence here on utility and constraint values, thus we removed this baseline to avoid confusion due to overlap and subsequent mismatch of colors between figure and legend.}. For both datasets, as we increase the tolerance on fairness violations, in the form of higher slack value $\epsilon$ in (\ref{eq:moderation-obj}), we see a major increase in constraint value, while allowing some increase in utility $\mathbb{E}[Y_{\sigma_A}]$. Observing the utility values broken down by group membership ($\mathbb{E}[Y_{\sigma_A}\,|\,S=1]$ vs. $\mathbb{E}[Y_{\sigma_A}\,|\,S=0]$), we see most of this increase of utility is driven by the privileged group, $S=1$ (a majority-white student body in the NYCSchools dataset, and white mothers for the IHDP dataset). This is to be expected given that we are trying to minimize interactions involving $S=1$ and $A$, i.e. where the membership in the privileged group inflates utility values, via higher $g$ values. These higher $g$ values for the $S=1$ group also translate into higher utility values for $S=1$ according the baseline policy, as can be seen from the orange line, indicating baseline policy actions in the corresponding figures. Notice that the choice of an appropriate slack $\epsilon$ here is a matter of trade-off based on user preferences of utility vs. constraint value, and will depend on the specific dataset. 
We can gain deeper insight into the working of our approach by inspecting the histogram of recommended actions at the end of the policy model training. For both datasets, we see that applying the constraint with a small $\epsilon$  value results in mapping more $S=1$ members to lower action values compared to the $S=0$ group, indicating that to enforce the constraint more tightly means allocating lower value actions to the privileged group, while giving out as many high actions to $S=0$ as possible under the action clipping setting described previously. Notice that in both settings we succeed in increasing the utility compared to the baseline policy. Recall that we employ action clipping to ensure some overlap with the baseline actions for estimation purposes. This means that we are bound to some extent to the interval of actions seen for each $s,x$ in the baseline policy (this trade-off can be explored via choice of $\eta$ value, see Section~\ref{sec:modbrk_method}). One could also increase $\eta$ to achieve higher utility at the cost of increased total action allocation (i.e. higher ``budget'') and sacrifice in coverage. This is also why we cannot achieve utility values that are as high as the highest $\text{Const}_{A}$ in the figures; we ask how best to distribute actions within the effective ``budget''.

\textbf{\EqualBenefit~Constraint Method.}
We present results for \EqualBenefit~on the NYCSchools dataset in Figs.~\ref{fig:fair_impact_NYC} and \ref{fig:fair_impact_NYC_actions}. As we increase the tolerance on fairness violations - in the form of higher slack value $\epsilon$ in (\ref{eq:statusquo-obj-final}) - we observe an increase in constraint value with almost no change in overall utility $\mathbb{E}[Y_{\sigma_A}]$. Observing the utility values broken down by group membership ($\mathbb{E}[Y_{\sigma_A}\,|\,S=1]$ vs. $\mathbb{E}[Y_{\sigma_A}\,|\,S=0]$), we see also no significant change in utility coming from either group. This result indicates that our method learns a policy that assigns actions ensuring equal benefit without sacrificing the utility of either group. Note that we also see an unusually high estimate of utility for $S=1$. This could be explained by the IPW estimator not extrapolating well to unseen data, as the $S=1$ group only consists of $7\%$ of an already small dataset. On the right hand side of Fig.~\ref{fig:fair_impact_NYC} we observe similar recommended action histograms for the unconstrained (top right) and constrained (bottom right) cases. This shows that our method decreases disparity with a similar ``budget''. This is possible because we are not attempting to reduce disparity present under the baseline policy. The similarity in action histograms is partially due to the IPW objective which aligns recommended actions with observed ones to achieve higher weighting. To validate our approach, we compute the ground truth constraint value through counterfactual realization of the policy model's predicted action mean in the data generating process and compute distributional difference of $Y_{\sigma_A} - Y_\emptyset\,|\,S$ for the different sensitive attribute groups. Notice also that our method succeeds in increasing the utility compared to the baseline policy. Although the baseline policy has the lowest constraint, this is expected as we are comparing distributions of $Y_{\sigma_A} - Y_\emptyset\,|\,S$, where $\sigma_A = \emptyset$. Dropping $S$ has a slight increase in constraint and a decrease in utility compared to unconstrained setting, as our policy model performs better through taking into account $S$ to break the indirect association between $S$ and $Y$. 

\section{Conclusion}\label{sec:discussion}
We introduced a causal framework to learn fair policies given access to observational data and an action space. Taking a pragmatic view, we asked what is the best utility that can be achieved with the provided action space, while controlling two notions of disparity: one focusing on mitigating a possible moderation effect involving group membership and the policy, and the other focusing on ensuring equal benefit with respect to a baseline policy. 
We see this work as a first conceptual contribution in defining pragmatic fair impact policies, and envisage various possible future directions, including extending the proposed methods beyond binary sensitive attributes, to a multi-stage policy setting, to handle unmeasured confounding and to online optimization. 

\textbf{Limitations discussion.} \textit{Conceptual.} Our pragmatic approach relies on data that we have about mechanisms \textit{in this world}, including possible inequities our actions in question cannot affect. We also do so within a stationary framework that currently does not account for feedback, although we see it as an important next step. This comes with caveats. First, whether or not it is desirable to control for disparities of outcomes across levels of the sensitive attribute is problem-dependent. For instance, if the action space consists of solely two options, to give a medical treatment or not, it may be unclear why we should take into consideration group differences in recovery: other things being equal, we just want to maximize the number of lives saved. In contrast, if $Y$ is a relative measure, e.g., of wealth, pure wealth maximization may be judged to be harmful if disparities among groups in $S$ are exacerbated. In this case, we may settle for a scenario with less aggregated wealth if disparities are controlled. Such value judgements are \emph{not} to be decided algorithmically. Our goal is to provide a formalization of disparity control \emph{if} it is judged to be desirable, and to provide an estimate of \emph{whether} different levels of control can be achieved under an acceptable loss of total expected outcome, \emph{given} an action space that is, again, a property of the real-world. We simply provide a framework to examine what is possible. However, if our data reflects biased mechanisms existing in the world that is outside of reach for our actions, we will not be able to change those, and they will be reflected in estimates of what is possible. This is in contrast to methods that aim to model alternative fair worlds. 

\textit{Technical.} 
The parametric assumption we make for the operationalization of the \EqualBenefit{}~constraint is one that could be avoided and extended for greater generalization (e.g., via Fréchet bounds, see Appendix~\ref{app:frechet}). However, in this work we opted for a parametric assumption to focus on a general concpetual introduction. For the \ModBrk~constraint we propose a simple MLP estimation of the decomposition in \eqref{eq:decomposition}. One could envision a more elaborate formulation that will avoid a possible failure mode, where $g$ simply subsumes $f$ and $h$. We did not observe empirically that this happens in practice. However, one could avoid such possible issue by including additional regularization or residual connections between the components of the MLP.

\subsubsection*{Acknowledgments}
The authors would like to thank Anian Ruoss and Jessica Schrouff for their helpful feedback on the manuscript, as well as the Alan Turing Institute for providing access to computational resources.

\bibliography{bib}
\bibliographystyle{tmlr}

\appendix
\onecolumn

\section{Further discussion of related works}\label{app:related_works}
Considering interventional data in the online setting, \cite{joseph2016fairness, joseph18meritocratic} derive algorithms that have ``meritocratic fairness,'' requiring that, at every round, no arm with lower expected reward is preferred to one with a higher expected reward. This goal can be seen as an individual-level fairness metric, but it is not at odds with a policy that maximizes utility; instead, its main concern is to offer more conservative exploration.

Recently, fairness in bandit settings has received a lot of additional attention \citep{huang2021achieving, schumann2022group, patil2020achieving}. We see interventional data and online sequential settings as an interesting future direction, especially appropriate for use cases with low individual impact, such as software and internet applications that interact with users naturally and frequently, at low participation cost. However, we focus on observational-data regimes which we see as often more appropriate, especially when the collection of interventional data could be harmful, or when it would be unfair for individuals to essentially pay the price for exploration and calibration of learning agents.

\section{Augmented Lagrangian penalties}\label{app:aug_lagrange}

We can write the constrained problem as  $\min_x \max_{\lambda \geq 0} \Hquad f(x) + \lambda(c(x)-b)$ (possibly with a flipped sign, depending on the optimization problem specifics). $f$ is our policy optimization objective, $c$ stands for the computed constraint, and $b$ is the level we wish to hold it at.
The max returns $f(x)$ when $x$ satisfies the constraints (as the maximum is obtained at $\lambda=0$), and $\infty$ otherwise (as the maximum is at $\lambda=\infty$). There could be a difficulty in optimizing the form above directly, as $\lambda$ jumps from 0 to $\infty$ when passing through the constraint boundary. To fix this we add a penalty on making large changes to $\lambda$, obtaining
$\min_x \max_{\lambda \geq 0} \Hquad f(x) + \lambda(c(x)-b) -1/(2 \mu ||\lambda-\lambda'||_2)$,
where $\lambda'$ are the Lagrange multipliers from the previous iteration and $\mu$ is a penalty term iteratively increased.

The inner max can be solved in closed form as

\begin{align}\label{eq:closed_form_augLambd}
    \lambda = 
        \begin{cases}
      0 & \text{if} \quad k(x) \leq - \frac{\lambda '}{\mu}\\
      \lambda ' + \mu \, k(x) & \text{otherwise}
    \end{cases}  
\end{align}

where k(x) := (c(x)-b), representing the inequality constraint. Plugging the above into a complete optimization problem,

\begin{align}
\begin{split}
   & \min_x f(x) + \phi(k(x), \lambda', \mu) \\
    & \text{s.t.} \quad \phi(k(x), \lambda', \mu)=
        \begin{cases}
      - \frac{{\lambda'}^2}{2 \mu} & \text{if} \quad k(x) \leq - \frac{\lambda '}{\mu}\\
      \lambda'k(x) + \frac{\mu\,k(x)^2}{2} & \text{otherwise}
    \end{cases}  
\end{split}
\end{align}

This suggests a straight-forward algorithm: at each step, minimize the above, update the value of $\lambda$ via \eqref{eq:closed_form_augLambd}, and repeat until convergence.

\section{The \ModBrk{} constraint as an LP}\label{app:ModBrk_LP}
In this section we show how the optimization problem in \eqref{eq:moderation-obj} can be cast as a Linear Program (LP) for the common case where the variables are discrete. We decouple the estimation problem and assume that $\mu^Y(a,s,x)$ and $p(s,x)$ are known or appropriately estimated. 

First, consider the case when $A$ is binary; then, the policy may be identified by $\mu^A_{\sigma_A}(s,x) = p(A=1|S=s,X=x;\sigma_A)$.  For convenience, define $\delta(s,x) = \mu^Y(1,s,x) - \mu^Y(0,s,x)$. Then, we may evaluate
\begin{align*}
    \ex[Y_\pi] =
    & \sum_{s,x}p(s,x)\left(\mu^Y(1,s,x)\mu^A_{\sigma_A}(s,x) + \mu^Y(0,s,x)(1-\mu^A_{\sigma_A}(s,x))\right) = \\
    & \sum_{s,x}p(s,x)(\mu^Y(0,s,x) + \delta(s,x)\mu^A_{\sigma_A}(s,x)),
\end{align*}
which simplifies the optimization problem to
\begin{align*}
    \max_{\sigma_A} &  \sum_{s,x} p(s,x)\delta(s,x)\mu^A_{\sigma_A}(s,x)\\
    s.t. &\quad 
    \sum_x \left(p(s,x)\delta(s,x)\mu^A_{\sigma_A}(s,x)
    - 
     p(s',x)\delta(s',x)\mu^A_{\sigma_A}(s',x)
     \right)
    \leq \sqrt{\epsilon}\quad\forall s, s'<s, \text{ and }\\
    &\quad \sum_x \left(p(s,x)\delta(s,x)\mu^A_{\sigma_A}(s,x)
    - 
     p(s',x)\delta(s',x)\mu^A_{\sigma_A}(s',x)\right)
    \geq -\sqrt{\epsilon}\quad\forall s, s'<s,
\end{align*}
which is a linear program in $\sigma_A$ with a linear objective and linear constraints.

To generalize to the non-binary case, we define, by a slight abuse of notation, 
$\mu^A_{\sigma_A}(a,s,x) = p(A=a|S=s,X=x;\sigma_A)$ and evaluate the objective as 
\[
    \ex[Y_\pi]
    =
    \sum_{s,x,a}p(s,x)\mu^Y(1,s,x)\mu^A_{\sigma_A}(a,s,x),
\]
leading to the linear program
\begin{align*}
    \max_{\mu^A_{\sigma_A}(a,s,x)} &
        \sum_{s,x,a}p(s,x)\mu^Y(a,s,x)\mu^A_{\sigma_A}(a,s,x)\\
    s.t. \quad & 
    \sum_{a,x} p(s,x) \left(\mu^Y(a,s,x)\mu^A_{\sigma_A}(a,s,x)
    -  p(s,x)\mu^Y(a,s,x)\mu^A_{\sigma_A}(a,s,x)\right)\\
    &\quad -
    \sum_{a,x} p(s',x) \left(\mu^Y(a,s',x)\sigma_A(s',x,a)
    -  p(s,x)\mu^Y(a,s',x)\mu^A_{\sigma_A}(a,s',x)\right)
    \leq \sqrt{\epsilon}\;\;\forall s, s'<s,\\
     \quad & 
    \sum_{a,x} p(s,x) \left(\mu^Y(a,s,x)\mu^A_{\sigma_A}(a,s,x)
    -  p(s,x)\mu^Y(a,s,x)\mu^A_{\sigma_A}(a,s,x)\right)\\
    &\quad -
    \sum_{a,x} p(s',x) \left(\mu^Y(a,s',x)\sigma_A(s',x,a)
    -  p(s,x)\mu^Y(a,s',x)\mu^A_{\sigma_A}(a,s',x)\right)
    \geq -\sqrt{\epsilon}\;\;\forall s, s'<s, \\
    &\quad 
    \sum_a \mu^A_{\sigma_A}(a,s,x) = 1\;\forall s,x,\;\text{ and }\;
    \mu^A_{\sigma_A}(a,s,x)\geq 0\;\forall a, s, x.
\end{align*}
The last two constraints ensure that the policy is a valid conditional probability distribution.

\section{The \EqualBenefit{} constraint formulation}\label{app:eqb}

We first formulate the \EqualBenefit{} constraint under a Gaussian assumption primarily for simplicity of exposition. We derive a conservative bound in section~\ref{sec:eqb_method}
to guarantee that $F^L_s(z) \leq F^{true}_s(z) \leq F^U_s(z)$, where $F^{true}_s(z) = \mathbb{P}(Y_{\sigma_A} - Y_\emptyset \leq z|s)$. Note that no tighter bound exists without further assumptions. This is a result that follows directly from the properties of a Gaussian distribution: that is, for a bivariate Gaussian $(X, Y)$ with fixed marginals, one gets the smallest/largest value of the cdf $\mathbb{P}(X - Y \leq z)$ among all possible correlation coefficients when their correlation is $-1$ or $1$ (depending on $z$ and the marginal means). This can be directly verified in the equation following the second paragraph in Section~\ref{sec:eqb_method} by simple differentiation. Thus we know the global bound, $F^U_s(z) - F^L_s(z)$. 

Relying on such-tight-as-possible bounds, equating the upper and lower bounds across groups will indeed ensure similar constraints, and is the best we can do if we do not want to deal with the quality of approximation itself, but rather enforce a similarity between approximations across groups.
In other words, the goal is to make the distribution of $D \equiv Y_{\sigma_A} - Y_\emptyset$ as independent as possible of the values of $S$, meaning any pair $s$ and $\bar{s}$ should imply the same conditional distribution for $D$ (up to an agreeable tolerance level $\epsilon$). Ideally, if the conditional distribution of $D$ was identified, the constraint would be

$$|\mathcal P(D \leq z ~|~s) - \mathcal P(D \leq z ~|~\bar s)| \leq \epsilon$$

As these distributions are \emph{not} identified, we apply the principle of aiming at \emph{not having any evidence against the lack of equality between these distributions}. This means making \emph{the set of all possible $\mathcal P(D \leq z~|~s)$ and $\mathcal P(D \leq z~|~\bar s)$ the same for all $z$}. As the smallest of such sets are given by intervals, this is equivalent to making the upper bounds the same and the lower bounds the same, \emph{not} mixing upper and lower bounds:

$$|\mathcal P^U(D \leq z ~|~s) - \mathcal P^U(D \leq z ~|~\bar s)| \leq \epsilon$$
$$|\mathcal P^L(D \leq z ~|~s) - \mathcal P^L(D \leq z ~|~\bar s)| \leq \epsilon$$

Finally, in order to reduce the number of possible Lagrange multipliers, we use the norm in \eqref{eq:statusquo-obj-final} as an alternative to enforcing all of the constraints separately.

For extensions beyond the normality assumption, there are very standard worst-case tight bounds based on standard copula theory. We describe those in the following.

\subsection{Fréchet bound \citep{makarov1982estimates}}\label{app:frechet}
Consider a random vector $(X, Y)$ with continuous marginal CDF $F_X(x) = \mathbb{P}(X \leq x), F_Y(y) = \mathbb{P}(Y \leq y)$. The copula of $(X, Y)$ is 
$$
C(u, v) = \mathbb{P}(X \leq F_{X}^{-1}(u), Y \leq F_{Y}^{-1}(v))
$$
The Fréchet-Hoeffding Theorem states that 
$$
\max \{u + v -1, 0\} \leq C(u, v) \leq \min \{u, v\}
$$

Thus we can bound the unknown joint distribution with estimable marginal distributions. However, we note that the benefit of this nonparametric approach comes at a computational cost, since we would need to estimate the CDF of marginal distributions. Additionally, we may end up with loose bounds, for example, when marginal probabilities take values around 0.5.

\subsection{Proof of model misspecification}
\begin{lemma}
The difference distribution $W:= X-Y$ for two (possibly dependent) random variables has density $f_W$ taking the form
\begin{equation}
    f_W(w) = \int_{-\infty}^{\infty} f_{X, Y}(x, x-w) dx
\end{equation}
where $f_{X, Y}(x, y)$ is the joint density function of $X, Y$. 
\end{lemma}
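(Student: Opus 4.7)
The plan is to obtain the density of $W = X - Y$ via a standard change-of-variables argument. Because $W$ is one-dimensional while the source $(X,Y)$ is two-dimensional, I would first augment the target with an auxiliary variable to make the transformation a bijection of $\mathbb{R}^2$ onto itself, apply the Jacobian formula, and then marginalize out the auxiliary variable.

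Concretely, I would introduce $Z := X$ and consider the map $T : (X,Y) \mapsto (Z,W) = (X,\, X-Y)$. This is a smooth bijection on $\mathbb{R}^2$ with inverse $T^{-1}(z,w) = (z,\, z-w)$, whose Jacobian determinant is
\[
\det \begin{pmatrix} 1 & 0 \\ 1 & -1 \end{pmatrix} = -1,
\]
so $|\det J_{T^{-1}}| = 1$. The change-of-variables formula for densities then yields $f_{Z,W}(z,w) = f_{X,Y}(z,\, z-w)$, and marginalizing over $z$ gives
\[
f_W(w) = \int_{-\infty}^{\infty} f_{Z,W}(z,w)\, dz = \int_{-\infty}^{\infty} f_{X,Y}(z,\, z-w)\, dz,
\]
which is the claimed expression after relabeling the integration variable as $x$.

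An equivalent route would start from the CDF $F_W(w) = \mathbb{P}(X - Y \leq w) = \int_{-\infty}^{\infty} \int_{x-w}^{\infty} f_{X,Y}(x,y)\, dy\, dx$ and differentiate under the integral sign via Leibniz's rule to obtain $f_W(w) = \int_{-\infty}^{\infty} f_{X,Y}(x, x-w)\, dx$. There is no real obstacle here: the only point requiring care is the interchange of differentiation and integration in the CDF approach, which is automatic under mild regularity of $f_{X,Y}$ (absolute continuity and integrability), and is entirely avoided by the change-of-variables route. Note in particular that no independence between $X$ and $Y$ is needed, since the argument uses only the joint density $f_{X,Y}$.
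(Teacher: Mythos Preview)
Your proposal is correct. Your primary route---augmenting to a bijection $(X,Y)\mapsto(X,X-Y)$, applying the Jacobian formula, and marginalizing---differs from the paper's proof, which takes exactly the CDF-differentiation route you sketch as an alternative: it writes $F_W(w)=\int_{-\infty}^\infty\int_{-\infty}^{w+y} f_{X,Y}(x,y)\,dx\,dy$, substitutes $x=t+y$ to bring $w$ into the integration limit alone, differentiates in $w$, and then substitutes back. Your change-of-variables argument is slightly cleaner in that it sidesteps the need to justify differentiating under the integral sign, while the paper's approach is more hands-on but requires the mild regularity you flag. Either way, the result is the same and neither argument needs independence of $X$ and $Y$.
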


\begin{proof}
We can write the cumulative density function of the difference $X-Y$ as the following:
\begin{align*}
&F_W(w) := \mathbb{P}(W \leq w) = \mathbb{P}(X - Y \leq w) = \int_{-\infty}^\infty \int_{-\infty}^{w+y} f_{X, Y}(x, y) dx dy 
\end{align*}
Note the density function is the derivative of the CDF. Thus, we have:
\begin{align*}
    f_W(w) &= \frac{d}{dw}F_W(w) = \frac{d}{dw} \int_{-\infty}^\infty \int_{-\infty}^{w+y} f_{X, Y}(x, y) dx dy \\
    &= \int_{-\infty}^\infty \frac{d}{dw} \int_{-\infty}^{w} f_{X, Y}(t+y, y) dt dy \quad (\text{change of variable}\ x= t+y  ) \\
    &=  \int_{-\infty}^\infty f_{X, Y}(w+y, y) dy \\
    &= \int_{-\infty}^\infty f_{X, Y}(x, x-w) dx  \quad \quad\quad\quad\quad (\text{change of variable}\ x= w+y  )
\end{align*}
\end{proof}

\begin{lemma}
\label{lemma:dist_quantile}
Let $W := X - Y$ and $W' := X' - Y'$ where $f_{X, Y}$ is the joint density function of $X, Y$ and $\phi_{X, Y}$ is the joint density function of $X', Y'$. Then $\forall w$,  
\begin{equation}
    \left| \mathbb{P}(W \leq w) - \mathbb{P}(W' \leq w) \right|\leq \left\Vert f_{X, Y} - \phi_{X, Y} \right \Vert_{1}
\end{equation}
\end{lemma}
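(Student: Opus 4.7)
The plan is to bypass Lemma 1 entirely and reduce the claim to the standard observation that any event probability differs between two measures by at most the $L_1$ distance of their densities. Concretely, for a fixed $w$, both $\mathbb{P}(W \leq w)$ and $\mathbb{P}(W' \leq w)$ are the masses assigned by $f_{X,Y}$ and $\phi_{X,Y}$ to the \emph{same} half-plane $A_w := \{(x,y) \in \mathbb{R}^2 : x - y \leq w\}$. So I would write
\[
\mathbb{P}(W \leq w) - \mathbb{P}(W' \leq w) = \int_{A_w}\bigl(f_{X,Y}(x,y) - \phi_{X,Y}(x,y)\bigr)\,dx\,dy,
\]
then apply the triangle inequality for integrals to obtain
\[
\bigl|\mathbb{P}(W \leq w) - \mathbb{P}(W' \leq w)\bigr| \;\leq\; \int_{A_w}\bigl|f_{X,Y}(x,y) - \phi_{X,Y}(x,y)\bigr|\,dx\,dy \;\leq\; \|f_{X,Y} - \phi_{X,Y}\|_1,
\]
where the final inequality uses non-negativity of $|f - \phi|$ and the fact that $A_w \subset \mathbb{R}^2$ only shrinks the domain of integration.

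If one instead prefers to route the argument through Lemma 1, the plan would be: write both CDFs as $\int_{-\infty}^{w} f_W(t)\,dt$ and $\int_{-\infty}^{w} f_{W'}(t)\,dt$ using Lemma 1, take the difference, swap the order of integration via Fubini (justified because all integrands are non-negative or absolutely integrable), and recognize the inner region as $A_w$; this recovers the same bound but with one extra change of variables.

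The only subtle point, and the step I would double-check, is the use of the triangle inequality inside the integral: one must pull the absolute value past an integral whose integrand can change sign, which is valid but is exactly where the sharp constant $1$ (as opposed to $\tfrac{1}{2}$, the total variation constant) comes from. This is why the bound is stated in terms of the $L_1$ norm of the density difference rather than the total variation distance between the joint laws. Beyond this, there is no real obstacle: the result is essentially the statement that the map from a density to the probability of any fixed Borel event is $1$-Lipschitz in $L_1$, specialized to the half-plane event $\{X - Y \leq w\}$.
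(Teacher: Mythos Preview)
Your proof is correct. Your primary argument is more direct than the paper's: you observe that $\{W\le w\}$ and $\{W'\le w\}$ are both the event that the underlying pair lies in the same half-plane $A_w=\{(x,y):x-y\le w\}$, so the difference of probabilities is the integral of $f_{X,Y}-\phi_{X,Y}$ over $A_w$, and the triangle inequality plus enlarging the domain to $\mathbb{R}^2$ finishes immediately. The paper instead routes the computation through Lemma~1: it writes each CDF as $\int_{-\infty}^{w} f_W(z)\,dz$ with $f_W(z)=\int f_{X,Y}(x,x-z)\,dx$, takes the difference, performs the change of variables $y=x-z$ (which recovers exactly your region $A_w$), and only then applies the triangle inequality. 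Your alternative sketch via Lemma~1 is essentially the paper's proof. The direct half-plane argument buys you brevity and makes Lemma~1 unnecessary for this statement; the paper's detour has no real advantage here beyond reusing the density formula it has just established.
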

\begin{proof}
Let $f_W$ be the density function of $W$ and $\phi_W$ be the density function of $W'$.
\begin{align*}
    \left| \mathbb{P}(W \leq w) - \mathbb{P}(W' \leq w) \right| &= \left| \int_{-\infty}^w f_W(z)- \phi_W(z) dz\right| \\
    &= \left| \int_{-\infty}^w \int_{-\infty}^\infty f_{X, Y}(x, x-z)-  \phi_{X, Y}(x, x-z)dx dz\right| \\
    &= \left| \int_{-\infty}^\infty \int_{x-w}^\infty f_{X, Y}(x, y)-  \phi_{X, Y}(x, y)dy dx\right| \\ 
    & \leq \int_{-\infty}^\infty \int_{x-w}^\infty \left|f_{X, Y}(x, y)-  \phi_{X, Y}(x, y)\right| dy dx \\
    & \leq \left\Vert f_{X, Y} - \phi_{X, Y} \right\Vert_1, \forall w
\end{align*}
\end{proof}

\begin{customthm}{1}[Model Misspecification]
For fixed $s, x$, let $Q = \mathbb{P}(Y_{\sigma_A} - Y_\emptyset \leq z |s, x)$ and our estimator $\hat{Q} = \Phi\left(\frac{z - \mu_{\sigma_A}^Y(s, x)+ \mu_\emptyset^Y(s, x)}{\sqrt{2V^Y(1-\rho(s, x))}}\right)$. Let $f_{s, x}$ denote the joint density function of $Y_{\sigma_A},  Y_\emptyset |s, x$ and $\phi_{s, x}$ denote the joint Gaussian density function with mean $[\mu^Y_{\sigma_A}(s, x), \mu^Y_{\emptyset}(s, x)]$ and shared variance $V$ with correlation $\rho(s, x)$. Then
\begin{equation}
    \left|Q - \hat{Q}\right|  \leq \left\Vert f_{s, x} - \phi_{s, x}\right\Vert_1 ,
\end{equation}
 where $||\cdot||_1$ denotes the $L_1$ norm.
\end{customthm}

Informally speaking, the accuracy of our estimator is bounded by the $L_1$ norm between our Gaussian approximated joint density function and the real underlying joint density function of observed treatment effect and counterfactual treatment effect conditioned on individual circumstance $(s, x)$.

\begin{proof}
We can think of $\phi_{s, x}$ as a jointly normal approximation to the true underlying joint density $f_{s, x}$ of observed treatment effect and counterfactual treatment effect conditioned on individual circumstance $(s, x)$. Then by Lemma \ref{lemma:dist_quantile}, we have 
$$
\left| \mathbb{P}(Y_{\sigma_A} - Y_\emptyset \leq z |s, x) - \Phi(\frac{z - \mu_{\sigma_A}^Y(s, x)+ \mu_\emptyset^Y(s, x)}{\sqrt{2V^Y(1-\rho(s, x)}})\right| \leq \left\Vert f_{s, x} - \phi_{s, x}\right\Vert _1.
$$
\end{proof}

\section{Additional dataset details}\label{app:datasets}

To the best of our knowledge, both datasets were made public under CC0 license.

\textbf{Action-augmented NYCSchools dataset.}
As sensitive attribute $S$, we consider the majority race of students in the school (white vs. non-white), while as covariates $X$ we consider: (1) if the school offers advanced placement or international baccalaureate classes, 
(2) whether the school offers calculus courses, and 
(3) the number of full-time counselors employed (fractional values indicate part-time work). 
We then create continuous actions corresponding to funding level decisions as
$A = (w^T_{SX} SX)^2 + \max(0,w^T_{X} X) + \mathcal{N}(0.5, \sigma)$,
where $w_{SX}$, $w_X$ are sampled uniformly in $[0,1]$, and 
$\sigma=0.4$ (to allow variability, but not encourage negative values, given that the mean is set at 0.5). We then scale $A$ to lie in [0,1].
We generate the outcome $Y$ that represents the percent of students who take the college entrance exams in the US (SAT/ACT), as $Y=20E + \beta^\T[S, X, SX, A, XA, SA, SAX]  + \gamma^\T[X, A, AX] + \mathcal{N}(1, \sigma)$,
where $E$ is the original percent of students taking the SAT/ACT exams (pre-college entry) appearing in the dataset. We multiply $E$ by 20 to bring it within the same scale as the other components in the structural equation. $\beta$ and $\gamma$ are sampled from a normal distribution, with higher mean values for coefficients affecting terms involving $A$, to ensure the outcome is noticeably responsive to changes in $A$. 
In the original dataset, $E$ is the outcome and therefore can be interpreted as a function of $S$ and $X$.

The form of $Y$ was picked as a simple demonstration of one that would include the original $Y$’s in the original dataset, as well as interaction terms across $X, S, A$, and with a component that does depend on $S$ versus one that does not. The goal was to upweight the original $Y$ such that the magnitude of the contribution of the different parts are on the same scale, and the noise was chosen to hold the Gaussian assumption we introduce, as well as retaining a meaningful signal to noise ratio when trying to fit a model for the conditional $\mathbb{E}[Y|S, X, A]$. We plotted the data to ensure that the Gaussianity assumption holds and the observed residuals to ensure that the model is reasonably fitted.

The original dataset includes 345 data points. For the \ModBrk~constraint, we use a bootstrapped version (by re-sampling $S, X, E$ with replacement), increasing the number of data points to 20,000.

\textbf{IHDP dataset.}
The IHDP dataset consists of data from a program that targeted low-birth-weight, premature infants, providing them with intensive high-quality child care and home visits from a trained provider. The continuous action $A$ is the self-selected number of participation days in the program. The outcome $Y$ is the child score attainment in cognitive tests at age three. As sensitive attribute $S$, we consider mother's race (white vs. non-white). After removing duplicate encoding and NaN values, we are left with 31 covariates,
22 categorical and 9 continuous. We use a bootstrapped version (by re-sampling $S, X$ with replacement, and obtaining corresponding $A, Y$ predictions from two black-box models trained on the original $S, Y$ in the dataset), increasing the number of samples to 20,000. 

We re-reinterpret the original setting into a resource allocation problems as follows: rather than as a self-decision, we suggest to think of the number of days in treatment as external, e.g., voucher allocation or by assigning different individuals to different lengths of participation. In this case, the \ModBrk~constraint's goal is to break the moderation of the allocation of days in program by the group membership (white vs. non-white mothers), such that the resource allocation policy is not responsible for the difference in test scores attained by different groups.

\section{Additional experimental details}\label{app:experiments}

All experiments were implemented using the Pytorch library \citep{paszke2019Pytorch}. The \EqualBenefit{} experiments were run on a standard personal Mac, using CPUs. The \ModBrk{} experiments were run on a single Microsoft Azure NVIDIA Tesla P100 GPU server (NC6s v2 instance). We provide the corresponding environment setups in the accompanying code. After performing hyper-parameter tuning, we ran the experiments appearing in the manuscripts with the following configurations:

\subsection{\EqualBenefit{} NYCSchools}
Our stage I model consists of two MLPs, where $\text{MLP}^A_\emptyset$ learns $\mu^A_\emptyset$ and $\text{MLP}^Y$ learns $\mu^Y$ as described in the main text. Both MLPs share the same structure, one hidden linear layer and one ReLu layer \citep{agarap2018deep}, with the hidden dimension set to 128 (for each component). We train the stage I model for 500 epochs for $\text{MLP}^Y$ and 1,000 epochs for $\text{MLP}^A_\emptyset$, with a learning rate set at 0.01 and 0.0001 respectively, and the optimizer is Adam \citep{kingma2017adam}. We verify the training of our stage I models by checking $R^2$ score involving predictions and the target. We also check the predictions of the trained model $\text{MLP}^Y$ for constant actions against the original model's. 

The stage II model consists of one linear layer and one ReLU layer, with the hidden dimension set to 10 (for each component). In order to speed up its convergence, we standardize the output of the stage II model with constant values. We train the stage II model for 1,000 epochs, with learning rate 0.01. 

\subsection{\ModBrk{} NYCSchools}
Our stage I model is an MLP composed of additive $f, g, h$ components, as described in the main text. For this setting, we use three linear layers, and two ReLU non-linearities \citep{agarap2018deep}, with the hidden dimension set to 256 (for each component). We train the stage I model for 3,000 epochs, with a learning rate set at 0.005, and the optimizer is Adam \citep{kingma2017adam}. We verify the training of our stage I model by checking explained variance and $R^2$ scores on a held-out set. We also check the predictions of the trained model for constant actions against the original model's.

The stage II model consists of 3 linear layers and 2 ReLU nonlinearities, with a single batchnorm layer following the first input layer. We also include a shifted Sigmoid at the end of the network to enforce our action clipping, as described in the main text. For this setting, the hidden dimension is set to 64, the learning rate is 0.001, and we use the Adam optimizer \citep{kingma2017adam}. We use scheduling for smoother training, and for both, we use a single batch of size 20,000.

\subsection{\ModBrk{} IHDP}
We use similar model architectures to the ones above for this dataset, with the following modifications. For the stage I model, we use only 2 linear layers and 1 ReLU non-linearity \citep{agarap2018deep}, with the hidden dimensions set at 512. This model is trained for 1,000 epochs.
We compare constant action predictions compared to those of a standard black box model trained for $\mathbb{E}[Y|S,X,A]$. The learning rate used for stage I is 0.001, with a small weight decay of 0.1 for regularization.

For stage II, we use 50 hidden dimensions and a learning rate of 0.0005. We train for 3,000 epochs.

See complete configuration specifications in the files configs/NYCSchools\_EqB\_configs.yml, configs/NYCSchools\_ModBrk\_configs.yml, configs/IHDP\_ModBrk\_configs.yml in the accompanying code.

\end{document}